\documentclass[10pt,journal,compsoc]{IEEEtran}

\usepackage{xspace}
\usepackage[table]{xcolor}

\usepackage{amsmath}
\usepackage{amssymb}
\usepackage{amsthm}
\usepackage{bm}
\usepackage{graphicx}
\usepackage{subcaption}
\usepackage{booktabs}
\usepackage{multirow}
\usepackage{makecell}
\usepackage[ruled,linesnumbered]{algorithm2e}
\usepackage{tikz}
\usepackage{algpseudocode}

\usepackage{cite}
\usepackage[switch]{lineno}
\usepackage[colorlinks,allcolors=cvprblue,breaklinks]{hyperref}
\usepackage{url}

\definecolor{cvprblue}{rgb}{0.21,0.49,0.74}
\definecolor{tabhighlight}{HTML}{e5e5e5}
\definecolor{video_quality}{HTML}{F0D695}
\definecolor{user_study}{HTML}{82ACD1}
\definecolor{score_average}{HTML}{98B567}

\newcommand{\method}{\texttt{VDOT}\xspace}
\newcommand{\methodbf}{\textbf{\texttt{VDOT}}\xspace}
\newcommand{\methodpp}{\texttt{VDOT\small{++}}\xspace}
\newcommand{\methodppbf}{\textbf{\texttt{VDOT\small{++}}}\xspace}
\newcommand{\methodconf}{\texttt{VDOT}\xspace}          
\newcommand{\methodbench}{\texttt{UVCBench}\xspace}

\newcommand{\real}{\mathrm{real}}
\newcommand{\fake}{\mathrm{fake}}
\newcommand{\pfake}{p_{\fake}}
\newcommand{\preal}{p_{\real}}
\newcommand{\sg}{\mathrm{sg}}
\newcommand{\Wdist}{\mathcal{D}_{\mathrm{OT}}}
\newcommand{\Weps}{\mathcal{D}_{\mathrm{EOT}}^{\epsilon}}
\newcommand{\Tplan}{\bm{T}}
\newcommand{\Topt}{\bm{T}^{*}}
\newcommand{\Cmat}{\bm{C}}                              
\newcommand{\softplus}{\mathrm{sp}}                     

\newtheorem{proposition}{Proposition}

\theoremstyle{remark}

\begin{document}

\title{VDOT++: Unified Few-Step Video Generation via Unbalanced Optimal Transport Distillation}

\author{
  Yutong Wang,
  Xingtong Ge,
  Enhuai Liu,
  Yunke Wang,
  Tianfan Xue,
  Yu~Qiao,~\IEEEmembership{Senior Member,~IEEE},
  Yaohui Wang, 
  Xinyuan Chen,
  and Chang~Xu,~\IEEEmembership{Senior Member,~IEEE}
  \IEEEcompsocitemizethanks{
    \IEEEcompsocthanksitem Yutong Wang, Enhuai Liu, Yunke Wang and Chang Xu are with the University of Sydney.
    \IEEEcompsocthanksitem Tianfan Xue, Yu Qiao, Yaohui Wang and Xinyuan Chen are with Shanghai AI Laboratory.
    \IEEEcompsocthanksitem Xingtong Ge is with the Hong Kong University of Science and Technology. Tianfan Xue is with the Chinese University of Hong Kong.
    \IEEEcompsocthanksitem Corresponding authors: Xinyuan Chen, Chang Xu.
  }
}

\markboth{Preprint}%
{Wang \MakeLowercase{\textit{et al.}}: VDOT++: Unified Few-Step Video Generation via Unbalanced Optimal Transport Distillation}

\IEEEtitleabstractindextext{%
  \begin{abstract}
Video creation spans text-to-video (T2V), image-to-video (I2V), and
condition-based generation, yet modern video diffusion models remain costly
because they repeatedly evaluate large backbones during sampling. Distribution
matching distillation (DMD)
reduces this cost, but its reverse Kullback--Leibler (KL) objective can provide
unstable or incomplete guidance when the student and teacher distributions
have limited overlap. A preliminary version of this work, \methodconf\
(CVPR~2026), addressed this issue by adding optimal transport distillation
(OTD), whose explicit coupling supplies geometric directions for
condition-based generation. Balanced OTD, however, performs full-mass matching
between the spatial tokens of each corresponding student--teacher frame pair.
This assumption weakens for T2V and I2V, where one condition admits many valid
outputs and spatial content need not align across different realizations. We
present \methodpp, a unified distillation framework that applies the same
training recipe separately to task-specific generators for the three task
families. It makes OTD robust to output diversity through an asymmetric
unbalanced formulation that allows unreliable student tokens to carry less
mass while maintaining coverage of the teacher tokens. An $\ell_1$
ground cost further replaces mean-based aggregation
with a more mode-preserving, coordinate-wise weighted median that limits the
influence of distant transport targets. The two changes respectively determine
whom to match and how the selected targets should be aggregated. We additionally
combine
distribution matching and adversarial refinement through sequential backward
passes, and exploit the decoupled score networks for cross-scale distillation,
where larger auxiliary score networks improve a compact generator. A scalable
multi-task data pipeline and the \methodbench\ benchmark support evaluation
across heterogeneous conditions. Experiments on \methodbench, VBench,
VBench-I2V, and the VACE
benchmark show that the resulting four-step generators are competitive with
many-step teachers and strong few-step baselines across all three task families.

\end{abstract}

  \begin{IEEEkeywords}
  Video generation, diffusion distillation, optimal transport, distribution matching, few-step sampling.
  \end{IEEEkeywords}
}

\maketitle
\IEEEdisplaynontitleabstractindextext

\section{Introduction}
\label{sec:intro}

\IEEEPARstart{V}{ideo} creation spans generation and editing from text,
reference images, source videos, and spatial masks. Unified models such as
VACE~\cite{jiang2025vace} and UNIC~\cite{ye2025unic} support these conditions
within a shared architecture, but their large diffusion backbones still
require many denoising evaluations. Few-step distillation can reduce this
cost, yet its supervision must remain reliable across tasks with very
different degrees of output ambiguity.

Distribution matching distillation (DMD)~\cite{yin2024one,yin2024improved}
uses an approximate reverse-KL gradient to align a few-step student with a
many-step teacher. When their distributions overlap poorly, this signal may
miss teacher modes or become unstable on unsupported student samples. Our
conference method, \methodconf~\cite{wang2026vdot}, supplements DMD with
optimal transport distillation (OTD), whose explicit coupling supplies
geometric directions for condition-based generation.

Balanced OTD, however, solves a full-mass spatial-token matching problem
independently within each corresponding student--teacher frame pair. Dense
conditions such as depth or pose make these within-frame correspondences
reliable, whereas T2V and single-frame I2V admit many valid appearances and
motions. In the latter regime, hard marginals force every student token to
carry prescribed mass even when no reliable teacher match exists
(\emph{whom to match}), while the squared-$\ell_2$ cost induces mean-based
aggregation over potentially incompatible teacher tokens
(\emph{how to aggregate}). The challenge is to retain geometric guidance
without treating diversity among valid outputs as error.

\methodpp\ addresses the two failure modes with asymmetric unbalanced OT and
an $\ell_1$ ground cost. A relaxed student marginal downweights unreliable
student tokens, while a strongly constrained teacher marginal maintains
coverage of the teacher tokens. The $\ell_1$ cost replaces mean-based
aggregation with a coordinate-wise weighted median that is less sensitive to
distant targets. We further combine distribution matching and adversarial
refinement through memory-efficient sequential backward passes, and use larger
auxiliary score networks during distillation to strengthen a compact generator.
Our evaluation covers single and composite condition-based tasks from
\methodbench, as well as established benchmarks for T2V and I2V.

Experiments reveal a consistent conditioning-dependent pattern: the relaxed
objective behaves similarly to balanced OTD when dense inputs establish
reliable correspondences, but better preserves valid motion and detail as the
admissible output space widens. Controlled ablations isolate matching, cost,
relaxation direction, adversarial refinement, and score-network scale.

A preliminary version of this work appeared at CVPR~2026~\cite{wang2026vdot}.
Compared with the conference version, this article makes three main
extensions:
\begin{itemize}
  \item We extend \methodconf\ from condition-based video creation to T2V and
  I2V, establishing a unified few-step distillation framework across all three
  task families. When the student and teacher distributions overlap poorly,
  OTD complements DMD's potentially unreliable score-difference gradients by
  constructing explicit token-level transport plans between their outputs.
  \item We identify forced correspondence and mean-based aggregation as two
  distinct failures of balanced, squared-cost OTD under output diversity, and
  replace it with asymmetric unbalanced transport and an $\ell_1$ ground cost.
  The former addresses \emph{whom to match} by relaxing unreliable
  student-token mass while preserving teacher-token coverage; the latter
  addresses \emph{how to aggregate} by limiting averaging across incompatible
  targets. Controlled ablations validate their complementary roles across
  conditioning strengths.
  \item We introduce a scalable training design that combines distribution
  matching and adversarial refinement through sequential backward passes,
  accumulating a joint update within the memory budget of large video
  backbones. We further exploit the decoupling between the generator and score
  networks for cross-scale distillation, using larger auxiliary score networks
  to provide stronger supervision and improve the generation quality of a
  compact generator.
\end{itemize}

\section{Related Work}
\label{sec:related}

\subsection{Unified Video Generation and Editing}
Unified generative models encode heterogeneous controls through a shared
conditioning interface. Image methods attach reusable control branches or
unify generation and editing within one architecture
~\cite{zhang2023adding,mou2024t2i,xiao2025omnigen,tan2025ominicontrol,wei2026univideo,wang2023videocomposer}.
Recent video systems extend this idea across multiple creation tasks.
VACE~\cite{jiang2025vace} represents text, images, videos, and masks as aligned
context frames and masks, whereas UNIC~\cite{ye2025unic} maps heterogeneous
inputs into shared token categories. Because these methods build on large
diffusion backbones~\cite{kong2024hunyuanvideo,wan2025}, they retain the
cost of iterative sampling. Our focus is complementary: rather than designing
a unified backbone, we develop a unified distillation framework that accelerates
pretrained models for VACE-style controllable creation, T2V, and I2V.

\subsection{Few-Step Diffusion Distillation}
Video diffusion acceleration has been explored through model
pruning~\cite{karnewar2025neodragon,wang2026pare,su2024f3}, quantization~\cite{li2025dvdquant,li2026deltaquant,wang2026pulsequant}, sparse attention~\cite{xi2025sparse,yang2026sparse,li2026vc,liu2026losa}, feature caching~\cite{liu2025timestep,liu2026d2cache,liu2025reusing},
and efficient sampling~\cite{yin2024one,yin2024improved,ge2026saltpp}. 
Among sampling acceleration approaches, few-step distillation
trains a student to approximate many-step generation with
a small number of denoising steps. 
Few-step diffusion distillation spans three relevant families.
\emph{Consistency distillation} compresses the teacher trajectory or enforces consistent predictions along it~\cite{salimans2022progressive,song2023consistency}. 
rCM further scales
continuous-time consistency training to large image and video models by
introducing score regularization~\cite{zheng2025large}.
\emph{Distribution matching} instead aligns the student and teacher output
distributions. 
DMD and DMD2 estimate a reverse-KL gradient using teacher and
fake score networks~\cite{yin2024one,yin2024improved}; CausVid,
Self-Forcing, and Salt adapt distribution-level training to autoregressive video
generation~\cite{yin2025causvid,huang2025selfforcing,ge2026salt}, while TDM matches few-step transition trajectories~\cite{luo2025tdm}.
\emph{Adversarial distillation} uses discriminators and real or
teacher-synthesized data to restore perceptual detail, as in ADD, APT, ADP, and AccVideo~\cite{sauer2024adversarial,lin2025apt,lu2025adversarial,
zhang2025accvideo}. Among these objectives, DMD provides distribution-level supervision through score differences, but this signal can become unreliable when student samples lie far from the teacher distribution. Our conference method, \methodconf~\cite{wang2026vdot}, complements DMD with a token-level OT coupling, providing geometric guidance even under poor distribution overlap. The present work further revisits its marginal constraints and ground cost for tasks with different degrees of output ambiguity.

\subsection{Optimal Transport for Distribution Alignment}
Optimal transport (OT) compares distributions through both a ground cost and
an explicit transport plan~\cite{peyre2019computational}, with entropic
regularization enabling efficient Sinkhorn optimization
~\cite{cuturi2013sinkhorn}. Transport geometry has supported generative
modeling~\cite{tolstikhin2017wasserstein,arjovsky2017wasserstein,
tartavel2016wasserstein,wang2024inverse}, assignment~\cite{ge2021yolox,luo2022weakly,wang2023self}, and one-step image
editing~\cite{lu2026chordedit}. Balanced OT requires all mass to participate.
Partial OT can leave costly mass unmatched~\cite{figalli2010optimal}, whereas
unbalanced OT replaces hard marginal equalities with divergence penalties
~\cite{chizat2018unbalanced,choi2023uotm}. Although UOT permits distinct
penalties for the two marginals, generic distribution alignment does not
determine which side should be relaxed more. Distillation supplies this
directionality: the evolving student may contain unreliable tokens, whereas
the fixed teacher remains the coverage target. This asymmetry motivates
different marginal penalties, while an $\ell_1$ ground cost limits averaging
over incompatible teacher tokens.

\section{Preliminaries}
\label{sec:prelim}

We review distribution matching distillation (DMD) and the optimal transport
(OT) formulations on which our method builds. Throughout, $\bm{c}$ denotes
the task-specific condition: text for T2V, text and an initial image for I2V,
or, following VACE~\cite{jiang2025vace}, text, context frames, and aligned
masks for controllable creation. 
These tasks range from text-only generation with broad output ambiguity to
densely conditioned generation with reliable spatial correspondence between
paired frames.

\subsection{Distribution Matching Distillation}
\label{sec:prelim_dmd}
DMD~\cite{yin2024one,yin2024improved} distills a pretrained diffusion model
into a few-step generator $G_\theta$. A frozen teacher score network $F_\psi$
represents the target distribution $\preal$, while a trainable fake score
network $F_\phi$ tracks the evolving student distribution $\pfake$. DMD
minimizes their reverse Kullback--Leibler (KL) divergence:
\begin{equation}
  \mathbb{D}_{\mathrm{KL}}(\pfake\,\|\,\preal)
  = \int \pfake(\bm{x})\log\frac{\pfake(\bm{x})}{\preal(\bm{x})}\,d\bm{x}.
  \label{eq:kl}
\end{equation}
We suppress the shared condition $\bm{c}$ in the densities for clarity.
Let $\hat{\bm{x}}_0=G_\theta(\bm{z},\bm{c})$ be a generated video latent and
$\bm{x}_t=\alpha_t\hat{\bm{x}}_0+\sigma_t\bm{\xi}$, with
$\bm{\xi}\sim\mathcal{N}(\bm{0},\bm{I})$, its noised version at diffusion
time $t$. Denote the corresponding noised distributions by
$p_{\fake,t}$ and $p_{\real,t}$, with scores
$s_y(\bm{x}_t,t)=\nabla_{\bm{x}_t}\log p_{y,t}(\bm{x}_t)$ for
$y\in\{\real,\fake\}$. DMD approximates the KL gradient using the weighted
score difference
\begin{equation}
  \nabla_{\mathrm{KL}}(\bm{x}_t,t)
  = w_t\alpha_t\big(
      s_{\fake}(\bm{x}_t,t)-s_{\real}(\bm{x}_t,t)\big),
  \label{eq:dmd_grad}
\end{equation}
where $w_t$ is a time-dependent scalar that normalizes the gradient magnitude
across noise levels. This pseudo-gradient is injected into the generator
through the regression loss
\begin{equation}
  \mathcal{L}_{\mathrm{DMD}}(\theta)
  = \mathbb{E}_{\bm{z},\bm{c},t,\bm{\xi}}\!\left[
    \big\| \hat{\bm{x}}_0 - \sg\big(\hat{\bm{x}}_0
      - \nabla_{\mathrm{KL}}(\bm{x}_t,t)\big)\big\|_2^2
  \right],
  \label{eq:dmd}
\end{equation}
where $\sg(\cdot)$ denotes stop-gradient.

Reverse KL is mode-seeking, which is especially problematic early in
few-step distillation when the student and teacher distributions have little
overlap~\cite{wang2026vdot,lu2025adversarial}. If
$\pfake(\bm{x})\!\to\!0$ while $\preal(\bm{x})>0$, the local contribution
$\pfake\log(\pfake/\preal)$ tends to zero. Teacher-supported regions that the
student has not yet reached therefore exert vanishing pressure, yielding the
\emph{zero-forcing} behavior illustrated in Fig.~\ref{fig:reverse_kl_ot}. In
the opposite case, $\pfake(\bm{x})>0$ while $\preal(\bm{x})\!\to\!0$, the
local contribution diverges and can produce excessively large, unstable
gradients.
Thus, reverse KL alone provides weak coverage of missing teacher modes while
penalizing unsupported student samples extremely sharply. This motivates a
geometric constraint that explicitly couples the two distributions.

\begin{figure}[t]
  \centering
  \includegraphics[width=\columnwidth]{}
  \caption{\textbf{Reverse-KL failure modes and the role of OT.} When the
  student and teacher distributions are initially separated, reverse KL can
  neglect uncovered teacher modes or produce unstable gradients on
  unsupported student samples. An OT coupling supplies explicit transport
  directions that guide the student toward the teacher support. Adapted from
  the conference version~\cite{wang2026vdot}.}
  \label{fig:reverse_kl_ot}
\end{figure}

\subsection{Optimal Transport Discrepancies}
\label{sec:prelim_ot}
Given sample matrices $\bm{A}\in\mathbb{R}^{I\times D}$ and
$\bm{B}\in\mathbb{R}^{J\times D}$ representing two empirical distributions,
their discrete OT discrepancy~\cite{peyre2019computational} is
\begin{equation}
  \Wdist(\bm{A},\bm{B})
  = \min_{\Tplan\in\Pi(\bm{u},\bm{\mu})}
    \langle \Cmat,\Tplan \rangle,
  \label{eq:w2}
\end{equation}
where $C_{ij}=d(\bm{a}_i,\bm{b}_j)$ is the ground cost and
$\Pi(\bm{u},\bm{\mu})=\{\Tplan\ge0\mid
\Tplan\bm{1}_J=\bm{u},\,\Tplan^{\!\top}\bm{1}_I=\bm{\mu}\}$ is the set of
plans with prescribed marginals, typically chosen to be uniform. The optimal
plan $\Topt$ specifies both which samples should be coupled and how much mass
each pair should exchange. The conventional squared-$\ell_2$ cost is
$d(\bm{a},\bm{b})=\tfrac12\|\bm{a}-\bm{b}\|_2^2$.

For efficient, differentiable optimization, we use the entropic OT (EOT)
discrepancy. Define the negative-entropy functional
$\mathcal{E}(\Tplan)=\sum_{i,j}T_{ij}(\log T_{ij}-1)$, with
$0\log 0=0$. 
The problem can be solved in $\mathcal{O}(IJ)$ per iteration with the Sinkhorn
algorithm~\cite{cuturi2013sinkhorn}. The regularization strength $\epsilon$
controls the diffuseness of the resulting plan.
\begin{equation}
  \Weps(\bm{A},\bm{B})
  = \min_{\Tplan\in\Pi(\bm{u},\bm{\mu})}
    \langle \Cmat,\Tplan \rangle
    + \epsilon\,\mathcal{E}(\Tplan),
  \label{eq:eot}
\end{equation}

Both Eqs.~\eqref{eq:w2} and~\eqref{eq:eot} are \emph{balanced}: their hard
marginal constraints require all mass on both sides to participate. Partial
OT~\cite{figalli2010optimal} instead transports only a prescribed fraction
$\kappa_{\mathrm{POT}}\in(0,1]$ of the mass and leaves costly
correspondences unmatched.
Unbalanced OT~\cite{chizat2018unbalanced} achieves a related relaxation by
replacing the marginal equalities with divergence penalties, so that mass may
deviate from the prescribed marginals where matching is unreliable. Balanced
OT is recovered when all mass participates in partial OT
($\kappa_{\mathrm{POT}}=1$) or
when the unbalanced marginal penalties become arbitrarily strong.
Together, these formulations separate the coupling constraints from the
choice of ground cost. Section~\ref{sec:method} specializes both components
for OTD across different conditioning strengths.

\begin{figure*}[t]
  \centering
  \includegraphics[width=1.0\textwidth]{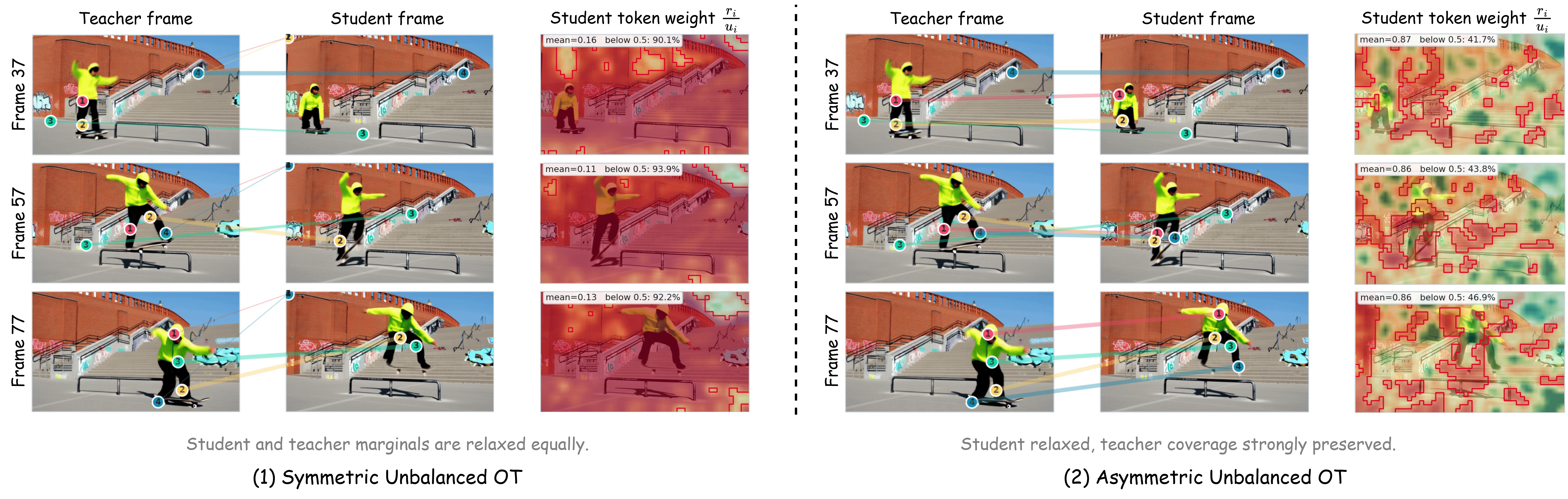}
  \caption{\textbf{Symmetric versus asymmetric unbalanced OT.}
  Under balanced OT, the hard student marginal gives $r_i/u_i=1$ for every
  token, producing a uniformly yellow map that cannot distinguish reliable
  from unreliable student tokens. With the frames, latent features, and
  ground costs fixed, symmetric UOT relaxes both marginals using
  $(\tau_s,\tau_t)=(0.1,0.1)$ (left), whereas our asymmetric variant uses
  $(\tau_s,\tau_t)=(0.1,10)$ (right) to relax the student side while
  preserving teacher coverage. Colored anchors and lines connect
  selected teacher tokens to their mass-weighted student matches under
  $\Topt$. The heat maps show relative student-token mass $r_i/u_i$, where
  $r_i=\sum_jT^*_{ij}$: red, yellow, and green indicate suppressed,
  near-uniform, and emphasized mass, and red contours mark $r_i/u_i<0.5$.
  Asymmetric relaxation preserves teacher coverage while selectively
  downweighting unreliable student regions.}
  \label{fig:teaser}
\end{figure*}

\section{Methodology}
\label{sec:method}

\subsection{Overview}
\methodpp\ distills a pretrained many-step video diffusion model into a
few-step generator using a shared training objective across task-specific
conditioning interfaces. We instantiate it with
VACE-Wan2.1-14B~\cite{jiang2025vace} for condition-based generation and
Wan2.1-14B~\cite{wan2025} for T2V and I2V. As
in DMD, a frozen teacher score network $F_\psi$ and a trainable fake-score
network $F_\phi$ guide the generator $G_\theta$. The conference version
introduced balanced OTD to supplement the reverse-KL signal with explicit
geometric correspondences~\cite{wang2026vdot}. We extend that formulation in
three respects: partial matching avoids forcing unreliable correspondences,
asymmetric marginal penalties preserve the distinct roles of student and
teacher, and an $\ell_1$ ground cost makes aggregation robust to diverse
targets. A lightweight adversarial head further corrects local appearance
errors. These losses are accumulated through sequential backward passes and
applied in a single generator update. Figure~\ref{fig:framework} summarizes
the framework; the remainder of this section develops each component and the
resulting training procedure.

\begin{figure*}[t]
  \centering
  \includegraphics[width=1.0\textwidth]{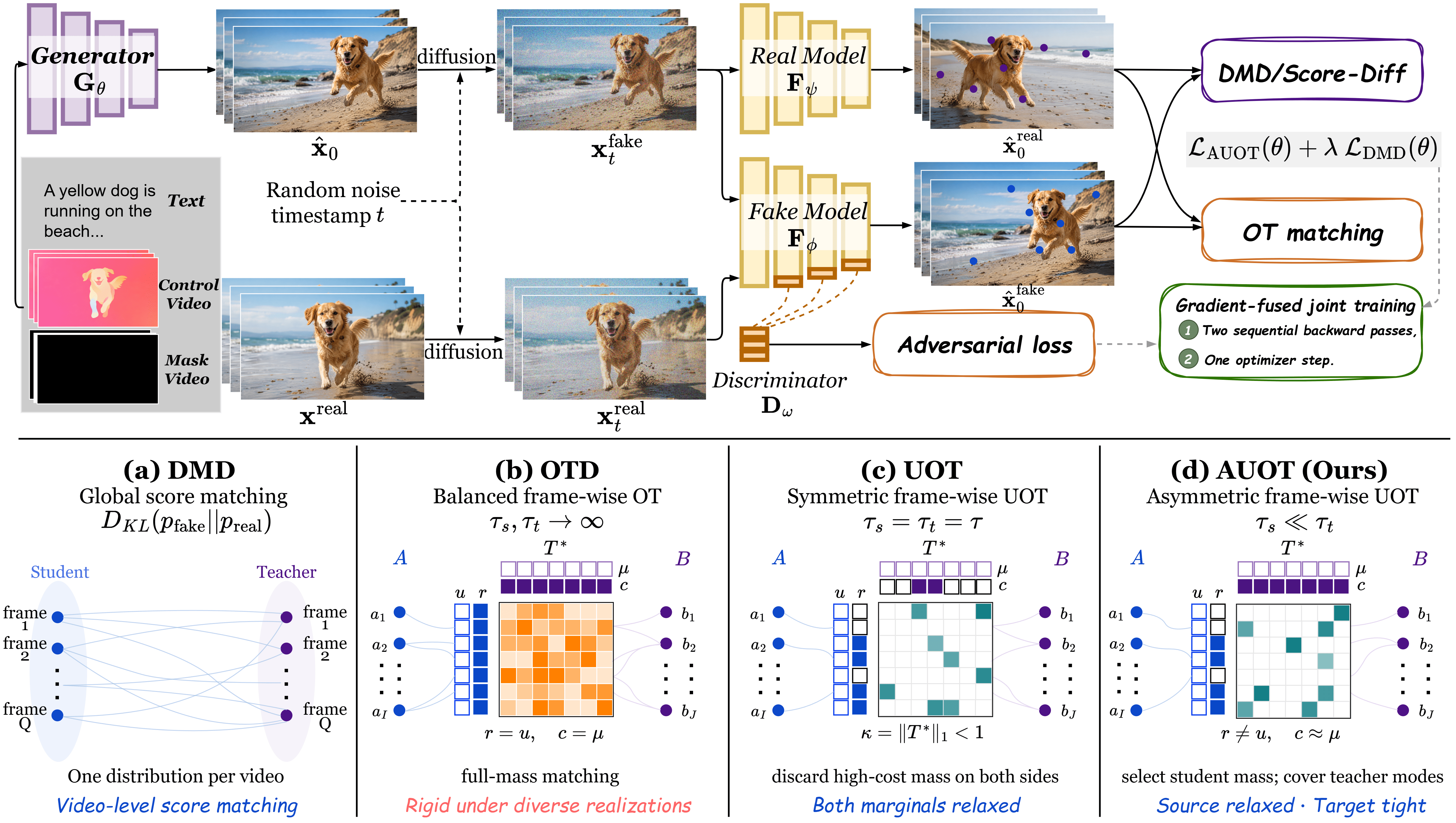}
  \caption{\textbf{Overview of \methodpp.} \emph{Top:} Given the conditioning
  signal, the few-step generator is jointly optimized by the DMD score-difference
  loss, frame-wise optimal-transport matching, and an adversarial
  loss through two sequential backward passes followed by one optimizer step.
  \emph{Bottom:} The OT diagrams show one paired temporal frame. For clarity,
  the diffusion-time and frame indices $(t,q)$ are omitted. DMD provides a
  score-difference signal at corresponding spatiotemporal coordinates;
  balanced OTD explicitly couples spatial tokens within each paired frame but
  enforces rigid full-mass matching; and symmetric UOT relaxes both marginals.
  Our asymmetric UOT instead uses a loose student-side penalty and a tight
  teacher-side penalty, selectively suppressing unreliable student-token mass
  while retaining coverage of teacher tokens.}
  \label{fig:framework}
  \vspace{0.2in}
\end{figure*}

\subsection{Notation and Problem Setup}
\label{sec:notation}

\begin{table}[t]
  \centering
  \caption{Core notation. The complete list is in Appendix~\ref{sec:supp_notation}.}
  \label{tab:notation_main}
  \footnotesize
  \setlength{\tabcolsep}{3pt}
  \begin{tabular}{@{}l p{0.63\columnwidth}@{}}
    \toprule
    Symbol & Meaning \\
    \midrule
    $G_\theta$; $F_\psi,F_\phi$ & generator; teacher and fake score networks \\
    $\pfake,\preal$ & student and teacher video distributions \\
    $\bm A_{t,q},\bm B_{t,q}$ & spatial-token sets in temporal frame $q$ at diffusion time $t$ \\
    $\bm u,\bm\mu$ & prescribed student and teacher marginals \\
    $\Cmat$ & pairwise ground-cost matrix \\
    $\Tplan_{t,q},\Topt_{t,q}$ & within-frame transport plan and its optimum \\
    $\bm r_{t,q},\bm c_{t,q}$ & row and column projections of $\Topt_{t,q}$ \\
    $\epsilon$; $\tau_s,\tau_t$ & entropy; marginal-penalty strengths \\
    $\rho_s,\rho_t$; $\kappa$ & damping factors; total plan mass \\
    \bottomrule
  \end{tabular}
\end{table}

\noindent\textbf{Networks.}
Training uses the few-step generator $G_\theta$, the frozen teacher score
network $F_\psi$, and the trainable fake score network $F_\phi$, which estimate
$s_{\real}$ and $s_{\fake}$, respectively. A lightweight discriminator head
$D_\omega$ is embedded in $F_\phi$ (Sec.~\ref{sec:gan}).

\noindent\textbf{Generation and noising.}
Given Gaussian noise $\bm{z}$ and condition $\bm{c}$, the generator produces
$\hat{\bm{x}}_0=G_\theta(\bm{z},\bm{c})$ in four steps. At a sampled diffusion
time $t$, we re-noise the output as
$\bm{x}_t=\alpha_t\hat{\bm{x}}_0+\sigma_t\bm{\xi}$ with
$\bm{\xi}\!\sim\!\mathcal{N}(\bm{0},\bm{I})$ under the base-model noise
schedule, and both score networks evaluate $\bm{x}_t$.

\noindent\textbf{Frame-wise spatial-token empirical distributions.}
The score predictions retain their temporal and spatial axes. For each frame
$q$ at diffusion time $t$, we spatially pool the feature maps and flatten their
spatial axes into student tokens
$\bm{A}_{t,q}=[\bm{a}_{t,q,1};\ldots;\bm{a}_{t,q,I}]
\in\mathbb{R}^{I\times D}$ and teacher tokens
$\bm{B}_{t,q}=[\bm{b}_{t,q,1};\ldots;\bm{b}_{t,q,J}]
\in\mathbb{R}^{J\times D}$. With marginals $\bm{u}\in\Delta^{I-1}$ and
$\bm{\mu}\in\Delta^{J-1}$ (uniform by default), a ground metric $d$ defines
$C_{t,q,ij}=d(\bm{a}_{t,q,i},\bm{b}_{t,q,j})$ and hence the within-frame cost
matrix $\Cmat_{t,q}$. Its transport plan is $\Tplan_{t,q}$, with optimum
$\Topt_{t,q}$. We solve transport independently for each frame, and omit $t,q$ when unambiguous.

\subsection{Optimal Transport Distillation}
\label{sec:otd}

We use the gradient of the EOT objective in Eq.~\eqref{eq:eot} as a geometric
distillation signal between $\pfake$ and $\preal$. A classical envelope
result makes this signal practical: the derivative of the optimal transport
value with respect to each pairwise cost is simply the corresponding entry of
the optimal plan~\cite{peyre2019computational}.

\begin{proposition}[Envelope property of entropic OT]
\label{prop:envelope}
Fix $\epsilon>0$ and let $W(\Cmat)$ denote the optimal value of the entropic
problem~\eqref{eq:eot} (or of its unbalanced
generalization~\eqref{eq:auot} with $\tau_s,\tau_t>0$) as a function of the
cost matrix. Then $W$ is concave and differentiable, and
$\partial W/\partial C_{ij}=T^*_{ij}$, where $\Topt$ is the unique minimizer
of the corresponding problem.
\end{proposition}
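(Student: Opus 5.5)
The plan is to treat $W(\Cmat)$ as a pointwise infimum of functions that are affine in $\Cmat$, and then to identify its gradient with a Danskin-type argument. For each feasible plan $\Tplan$, write
\begin{equation*}
  \Phi(\Cmat,\Tplan)=\langle \Cmat,\Tplan\rangle+\epsilon\,\mathcal{E}(\Tplan)+R(\Tplan).
\end{equation*}
In the balanced case~\eqref{eq:eot}, $R$ is the indicator of $\Pi(\bm{u},\bm{\mu})$. In the unbalanced case~\eqref{eq:auot}, I expect $R(\Tplan)=\tau_s\,\mathrm{KL}(\Tplan\bm{1}_J\,\|\,\bm{u})+\tau_t\,\mathrm{KL}(\Tplan^{\!\top}\bm{1}_I\,\|\,\bm{\mu})$ over $\Tplan\ge0$. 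In both cases $\Tplan\mapsto\Phi(\Cmat,\Tplan)$ is convex and lower semicontinuous, and $\Cmat\mapsto\Phi(\Cmat,\Tplan)$ is affine. Hence $W(\Cmat)=\inf_{\Tplan}\Phi(\Cmat,\Tplan)$ is concave as an infimum of affine functions. This gives the first claim directly.

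\textbf{Existence and uniqueness of $\Topt$.} The term $\epsilon\mathcal{E}$ is strictly convex on the nonnegative orthant, because $x\mapsto x(\log x-1)$ is strictly convex on $[0,\infty)$. So $\Phi(\Cmat,\cdot)$ is strictly convex and has at most one minimizer. For existence, the balanced feasible set is compact, which suffices. In the unbalanced case I need coercivity: with $\tau_s,\tau_t>0$, the KL penalty on the marginals grows like $m\log m$ in the total mass $m=\sum_{ij}T_{ij}$, while $\langle\Cmat,\Tplan\rangle\ge-\|\Cmat\|_\infty m$ grows only linearly. Sublevel sets are therefore bounded, and a minimizer exists.

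\textbf{Interior optimum.} From the first-order conditions I would also record the Gibbs form $T^*_{ij}=\exp((f_i+g_j-C_{ij})/\epsilon)$, with finite dual potentials in the balanced case and $f,g$ given by Sinkhorn-type fixed points in the unbalanced case. This shows $\Topt>0$ entrywise. That is useful but not essential for the argument below.

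\textbf{Differentiability and the gradient formula.} Since $W$ is concave and finite on all of $\mathbb{R}^{I\times J}$, its superdifferential at $\Cmat$ is nonempty. Danskin's theorem for infima of affine families gives
\begin{equation*}
  \partial W(\Cmat)=\mathrm{conv}\{\Tplan:\Tplan \text{ minimizes } \Phi(\Cmat,\cdot)\}.
\end{equation*}
The hypotheses are that the minimizer set is compact and the infimum can be restricted to a compact set uniformly for $\Cmat'$ near $\Cmat$. The latter follows from the coercivity bound above, which is locally uniform in $\Cmat$. By uniqueness, the superdifferential is the singleton $\{\Topt\}$. A concave function whose superdifferential is a singleton at a point is differentiable there, with gradient equal to that element. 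Hence $\partial W/\partial C_{ij}=T^*_{ij}$.

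\textbf{Alternative route via duality.} One could instead write $W$ as a smooth dual maximum over potentials $(f,g)$ and apply the envelope theorem directly. In the balanced case the potentials are unique only up to the shift $(f+c,g-c)$, which makes that route less clean. The primal Danskin route avoids the issue.

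\textbf{Main obstacle.} The step I expect to need the most care is the uniform compactness used in Danskin's theorem for the unbalanced problem. The admissible set there is unbounded, and I must show that minimizers for all $\Cmat'$ in a neighborhood lie in one fixed bounded set. My plan is to compare with the trivial plan $\Tplan=0$, which gives
\begin{equation*}
  W(\Cmat')\le \tau_s\textstyle\sum_i u_i+\tau_t\sum_j\mu_j.
\end{equation*}
Combining this upper bound with the $m\log m$ lower bound on $R(\Tplan)$ and $\epsilon\mathcal{E}(\Tplan)\ge-m$ then yields an explicit bound on the mass $m$ in terms of $\|\Cmat'\|_\infty$.
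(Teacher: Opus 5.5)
Your proposal is correct and follows the same route as the paper's proof: concavity because $W$ is an infimum of functions affine in $\Cmat$, uniqueness from the strict convexity of the entropy, and Danskin's theorem for the gradient. Beyond the paper, you also supply the existence and locally uniform compactness arguments it leaves implicit. One small slip: $\epsilon\,\mathcal{E}(\Tplan)\ge -m$ fails in general, for instance when small mass is spread thinly over many entries. The pointwise bound $x(\log x-1)\ge -1$, which gives $\mathcal{E}(\Tplan)\ge -IJ$, serves the same purpose in your coercivity bound.
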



\noindent Applying the chain rule through $C_{ij}=d(\bm{a}_i,\bm{b}_j)$, the
gradient of $W$ with respect to a student spatial token $\bm{a}_i$ is
\begin{equation}
  \nabla_{\bm{a}_i}W
  = \sum_{j=1}^{J} T^*_{ij}\,\nabla_{\bm{a}_i} d(\bm{a}_i,\bm{b}_j)
  \label{eq:otgrad}
\end{equation}
With the squared-$\ell_2$ ground cost $d(\bm{a},\bm{b})=\tfrac12\|\bm{a}-\bm{b}\|_2^2$, this becomes:
\begin{equation}
  \nabla_{\bm{a}_i}W
  = \sum_j T^*_{ij}\,(\bm{a}_i-\bm{b}_j).
  \label{eq:otgrad_l2}
\end{equation}
Each teacher token $\bm{b}_j$ pulls the student token $\bm{a}_i$ in a
direction and with a strength determined by $T^*_{ij}$. By the envelope
property, the implicit dependence of $\Topt$ on $\bm{a}_i$ contributes no
additional first-order term, so the Sinkhorn iterations need not be
differentiated through.

\noindent\textbf{Backpropagation to the generator.}
Using the OT gradient \eqref{eq:otgrad_l2} as a \emph{pseudo-gradient}
signal analogous to the KL score difference in DMD, we define
$\nabla_{\mathrm{OT}}(\bm{x}_t,t)$ as the result of backpropagating
Eq.~\eqref{eq:otgrad_l2} from $\bm{a}_i$ through the fake-score network to
$\bm{x}_t$. The OTD loss then mirrors the DMD loss~\eqref{eq:dmd}:
\begin{equation}
  \mathcal{L}_{\mathrm{OTD}}(\theta)
  = \mathbb{E}_{\bm{z},\bm{c},t,\bm{\xi}}\!\left[
    \big\|\hat{\bm{x}}_0 - \sg\big(\hat{\bm{x}}_0
      - \alpha_{\mathrm{OT}}\,\nabla_{\mathrm{OT}}(\bm{x}_t,t)\big)\big\|_2^2
  \right],
  \label{eq:otd}
\end{equation}
where $\alpha_{\mathrm{OT}}$ aligns the magnitudes of the OT and DMD signals.
Unlike the unconstrained score difference, the OT gradient couples student
and teacher spatial tokens through an explicit within-frame plan. The plans
are computed independently for all paired temporal frames and their gradients
are averaged over the batch and frame axes. OTD therefore supplies geometric
directions that reverse KL lacks when the two distributions have limited
overlap (Fig.~\ref{fig:reverse_kl_ot}). 
This objective, used with uniform marginals and the squared cost, is the
formulation of our conference version~\cite{wang2026vdot}.

\subsection{Why Balanced OTD Weakens under Output Diversity}
\label{sec:diagnosis}

Balanced OTD is reliable when dense conditions align spatial content within
corresponding frames, but restrictive for T2V and single-frame I2V, where two
valid videos need not share the same spatial layout. Its balanced,
squared-$\ell_2$ formulation can therefore match unrelated tokens and
aggregate diverse targets poorly.

\noindent\textbf{Pathology A (whom to match): forced correspondence.}
The constraints $\Tplan\bm{1}=\bm{u}$ and
$\Tplan^{\!\top}\bm{1}=\bm{\mu}$ require every spatial token to send and
receive its prescribed mass. When two valid videos differ in spatial content,
unrelated token pairs still contribute displacement vectors to
Eq.~\eqref{eq:otgrad_l2}. Soft partial matching instead reduces the influence
of costly token pairs and marginals.

\noindent\textbf{Pathology B (how to aggregate): mean-based averaging.}
The ground cost determines how each student token aggregates its matched
teacher tokens.

\begin{proposition}[Squared cost pulls toward the weighted mean]
\label{prop:mean}
For a fixed transport plan $T^*$, consider any row with positive
transported mass $r_i=\sum_j T^*_{ij}>0$, 
the squared-$\ell_2$ OT gradient~\eqref{eq:otgrad_l2} is proportional to the difference between the student token and the transport-weighted mean of the teacher tokens:
\begin{align}
  \nabla_{\bm{a}_i}W
  &= \sum_j T^*_{ij}\,(\bm{a}_i-\bm{b}_j)
  \notag\\
  &= r_i\Big(\bm{a}_i
     - \underbrace{\frac{\sum_j T^*_{ij}\,\bm{b}_j}{\sum_j T^*_{ij}}}_{
       \bar{\bm{b}}_i\;(\text{weighted mean})}\Big),
  \label{eq:meanpull}
\end{align}
where $r_i$ is the total mass leaving $\bm{a}_i$.
Consequently, $\nabla_{\bm{a}_i}W=\bm{0}$ if and only if
$\bm{a}_i=\bar{\bm{b}}_i$, the weighted mean of its transport targets under
the plan at that point.
\end{proposition}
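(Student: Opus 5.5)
The plan is a direct algebraic computation on a single row of the fixed plan, followed by a short argument for the ``if and only if'' claim.

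\textbf{Step 1: the gradient.} Starting from the general chain-rule formula~\eqref{eq:otgrad}, I specialize to $d(\bm{a},\bm{b})=\tfrac12\|\bm{a}-\bm{b}\|_2^2$. Its partial gradient is $\nabla_{\bm{a}}d=\bm{a}-\bm{b}$. This recovers Eq.~\eqref{eq:otgrad_l2}, which is the first line of~\eqref{eq:meanpull}. Because the plan $T^*$ is held fixed, Proposition~\ref{prop:envelope} is not needed here beyond justifying that~\eqref{eq:otgrad} is the gradient in the first place.

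\textbf{Step 2: factoring out the row mass.} Using linearity of the sum, I split
\[
\sum_j T^*_{ij}(\bm{a}_i-\bm{b}_j)=\Big(\sum_j T^*_{ij}\Big)\bm{a}_i-\sum_j T^*_{ij}\bm{b}_j = r_i\bm{a}_i-\sum_j T^*_{ij}\bm{b}_j .
\]
Since $r_i>0$ by hypothesis, I can factor out $r_i$. The remaining term is $r_i^{-1}\sum_j T^*_{ij}\bm{b}_j=\bar{\bm{b}}_i$, which is a genuine convex combination of the teacher tokens because the weights $T^*_{ij}/r_i$ are nonnegative and sum to one. This gives the second line of~\eqref{eq:meanpull}. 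It also shows the gradient is proportional to $\bm{a}_i-\bar{\bm{b}}_i$, with the nonnegative proportionality constant $r_i$.

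\textbf{Step 3: the equivalence.} From $\nabla_{\bm{a}_i}W=r_i(\bm{a}_i-\bar{\bm{b}}_i)$ with $r_i>0$, the gradient vanishes if and only if $\bm{a}_i-\bar{\bm{b}}_i=\bm{0}$. Both directions follow immediately, since a nonzero scalar times a vector is zero only when the vector is zero.

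There is no real obstacle; the only delicate point is the scope of the claim. The statement holds ``for a fixed plan,'' so I will stress that $\bar{\bm{b}}_i$ is defined relative to the current $T^*$. In the full OT objective, $T^*$ itself depends on $\bm{a}_i$. By the envelope argument of Proposition~\ref{prop:envelope}, however, that dependence adds no first-order term, so the stationarity characterization remains valid at the current point.
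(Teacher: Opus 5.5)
Your proposal is correct and matches the paper's proof: it factors $\sum_j T^*_{ij}(\bm{a}_i-\bm{b}_j)=r_i\bm{a}_i-\sum_j T^*_{ij}\bm{b}_j=r_i(\bm{a}_i-\bar{\bm{b}}_i)$ and then uses $r_i>0$ to get the equivalence. Your extra remarks are harmless additions that the paper leaves implicit, namely re-deriving Eq.~\eqref{eq:otgrad_l2} from Eq.~\eqref{eq:otgrad}, observing that $\bar{\bm{b}}_i$ is a convex combination, and invoking the envelope property for the dependence of $T^*$ on $\bm{a}_i$.
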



For a fixed transport plan, visually distinct targets can yield a weighted
mean between modes and blur localized high-frequency features. Marginal
relaxation and a robust cost are therefore complementary: the former selects
reliable pairs, while the latter changes how their targets are aggregated.

\subsection{Partial Matching: Addressing \emph{Whom to Match}}
\label{sec:partial}

We use unbalanced OT to implement soft mass reweighting. Specifically, the
hard equalities
$\Tplan\bm{1}=\bm{u}$ and $\Tplan^{\!\top}\bm{1}=\bm{\mu}$ are replaced by
soft KL penalties, where $\mathrm{KL}$ denotes the generalized KL divergence
for nonnegative measures (Appendix~\ref{sec:supp_impl}). The symmetric
relaxed problem is
\begin{multline}
  \mathcal{D}_{\mathrm{UOT}}^{\epsilon,\tau}(\bm{A},\bm{B})
  = \min_{\Tplan\ge 0}\;
    \langle \Cmat,\Tplan\rangle + \epsilon\mathcal{E}(\Tplan) \\
    + \tau\big[\mathrm{KL}(\Tplan\bm{1}\,\|\,\bm{u})
             + \mathrm{KL}(\Tplan^{\!\top}\bm{1}\,\|\,\bm{\mu})\big].
  \label{eq:uot}
\end{multline}

\noindent\textbf{Role of the penalty weight $\tau$.}
The parameter $\tau>0$ interpolates between two extremes. As
$\tau\!\to\!0$, the marginal penalties vanish and the solution approaches
the uncoupled Gibbs plan, so it no longer meaningfully preserves the source
or target mass. At moderate $\tau$, the plan can softly suppress costly rows,
columns, and pairwise couplings rather than enforcing the prescribed
marginals exactly. We monitor its total optimized mass by
\begin{equation}
  \kappa(\tau) = \sum_{i,j} T^*_{ij} \;>\;0,
  \label{eq:kappa}
\end{equation}
where the prescribed marginals have unit total mass. 

\begin{proposition}[Recovery of balanced OT]
\label{prop:limit}
Let $\Tplan_\tau$ denote the minimizer of~\eqref{eq:uot} (or of the
asymmetric variant~\eqref{eq:auot} with
$\tau_s,\tau_t\!\to\!\infty$ jointly). Then, as $\tau\!\to\!\infty$,
\begin{equation}
  \mathcal{D}_{\mathrm{UOT}}^{\epsilon,\tau}(\bm{A},\bm{B})\;\nearrow\;
  \Weps(\bm{A},\bm{B}),
  \qquad
  \Tplan_\tau \to \Topt_{\mathrm{bal}},
\end{equation}
where $\Topt_{\mathrm{bal}}$ is the unique optimizer of the balanced
entropic problem~\eqref{eq:eot}.
\end{proposition}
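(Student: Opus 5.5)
We argue by a monotone penalty argument, treating the objectives as functions on the nonnegative orthant. Write $F_\tau(\Tplan)=\langle\Cmat,\Tplan\rangle+\epsilon\mathcal{E}(\Tplan)+\tau P(\Tplan)$ with $P(\Tplan)=\mathrm{KL}(\Tplan\bm{1}\,\|\,\bm{u})+\mathrm{KL}(\Tplan^{\!\top}\bm{1}\,\|\,\bm{\mu})\ge0$. Since the generalized KL vanishes exactly when its arguments coincide, $P(\Tplan)=0$ if and only if $\Tplan\in\Pi(\bm{u},\bm{\mu})$. Because $P\ge0$, $F_\tau$ is pointwise nondecreasing in $\tau$, so $\mathcal{D}_{\mathrm{UOT}}^{\epsilon,\tau}=\min F_\tau$ is nondecreasing. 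Evaluating $F_\tau$ at $\Topt_{\mathrm{bal}}$, where $P=0$, gives the uniform upper bound $\mathcal{D}_{\mathrm{UOT}}^{\epsilon,\tau}\le\Weps$. Hence the values increase to some limit $L\le\Weps$. For the asymmetric variant~\eqref{eq:auot}, we use the penalty $\tau_sP_s+\tau_tP_t$ and run the same argument along any path with $\min(\tau_s,\tau_t)\to\infty$, bounding the penalty below by $\min(\tau_s,\tau_t)(P_s+P_t)$.

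Next we need compactness of the minimizers $\Tplan_\tau$ (existence and uniqueness follow from strict convexity of $\epsilon\mathcal{E}$ and coercivity). The entropy term grows superlinearly in the total mass, and $\langle\Cmat,\Tplan\rangle\ge\min_{ij}C_{ij}\sum\Tplan$ is at worst linear. Together with $F_{\tau}(\Tplan_\tau)\le\Weps$ and $P\ge0$, this bounds the mass $\kappa(\tau)$ uniformly for $\tau\ge1$. Moreover, $\langle\Cmat,\Tplan_\tau\rangle+\epsilon\mathcal{E}(\Tplan_\tau)$ is bounded below on this bounded set by some constant $m$, which yields
\begin{equation}
P(\Tplan_\tau)\le(\Weps-m)/\tau\to0 .
\end{equation}

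We then pass to limits. Take any subsequence $\tau_k\to\infty$ with $\Tplan_{\tau_k}\to\bar{\Tplan}$. By continuity of $P$ on bounded sets (with the convention $0\log0=0$), $P(\bar{\Tplan})=0$, so $\bar{\Tplan}\in\Pi(\bm{u},\bm{\mu})$. By lower semicontinuity, and in fact continuity, of the transport-plus-entropy term,
\begin{equation}
\langle\Cmat,\bar{\Tplan}\rangle+\epsilon\mathcal{E}(\bar{\Tplan})\le\liminf_k F_{\tau_k}(\Tplan_{\tau_k})=L\le\Weps .
\end{equation}
Since $\bar{\Tplan}$ is feasible for~\eqref{eq:eot}, its objective is at least $\Weps$. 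Therefore $L=\Weps$ and $\bar{\Tplan}$ is a balanced minimizer. Uniqueness of $\Topt_{\mathrm{bal}}$, by strict convexity of $\mathcal{E}$ on the convex polytope $\Pi$, forces $\bar{\Tplan}=\Topt_{\mathrm{bal}}$. Because every subsequential limit is the same, the whole family converges.

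The main obstacle is the uniform mass bound and the resulting uniform lower bound $m$, since $\mathcal{E}$ is unbounded below only on unbounded sets. The first thing to try is to minimize $\sum_{ij}(C_{ij}t_{ij}+\epsilon t_{ij}(\log t_{ij}-1))$ entrywise. This gives the explicit bound $-\epsilon\sum_{ij}e^{-C_{ij}/\epsilon}$, which holds for all $\Tplan\ge0$ and so sidesteps the mass bound for $m$. The mass bound itself then follows from the superlinear growth of $t\log t$.
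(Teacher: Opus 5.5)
Your proposal is correct and follows essentially the same route as the paper's proof. Both arguments use four steps: the penalized value is monotone in $\tau$ and bounded above by $\Weps$ (by evaluating at a balanced-feasible plan); the KL penalties are forced to vanish; every accumulation point of the minimizers is balanced-feasible with objective at most $\Weps$, hence optimal; and uniqueness from strict convexity of the entropy gives convergence of the whole family. You additionally make explicit the uniform mass bound and the lower bound $m=-\epsilon\sum_{ij}e^{-C_{ij}/\epsilon}$ that the paper's sketch leaves implicit when it refers to the ``bounded family'' and to the penalties being forced to vanish.
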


The proof is provided in Appendix~\ref{sec:supp_proofs}.

\noindent\textbf{Solving the relaxed problem.}
Eq.~\eqref{eq:uot} is solved with the scaling algorithm of Chizat et
al.~\cite{chizat2018unbalanced}, a Sinkhorn-type block-coordinate ascent on
the concave dual whose only change relative to balanced Sinkhorn is that
each log-domain dual update is damped by the factor
$\rho=\tau/(\tau+\epsilon)\in(0,1)$; the limit $\rho\to1$ recovers the
balanced iteration. Under the standard assumptions of generalized Sinkhorn scaling,
the iterates converge to the relaxed solution~\cite{chizat2018unbalanced};
a few dozen iterations suffice for our problem sizes
(Sec.~\ref{sec:setup}). The updates for the asymmetric variant we actually
use are given below.

Partial matching relaxes only the frame-wise spatial-token OTD term; DMD and
the adversarial loss still constrain the complete generated distribution.
Downweighted token mass therefore attenuates unreliable local correspondence
rather than global supervision.

\noindent\textbf{Asymmetric relaxation for distillation.}
The symmetric formulation in Eq.~\eqref{eq:uot} treats both marginals alike,
although the student is optimized and the teacher is fixed. Unreliable
student tokens should be allowed to carry reduced row mass, whereas difficult
teacher modes should remain represented. We therefore assign separate marginal
penalties:
\begin{multline}
  \mathcal{D}_{\mathrm{AUOT}}^{\epsilon,\tau_s,\tau_t}(\bm{A},\bm{B})
  = \min_{\Tplan\ge 0}\;
    \langle \Cmat,\Tplan\rangle + \epsilon\mathcal{E}(\Tplan) \\
    + \tau_s\,\mathrm{KL}(\Tplan\bm{1}\,\|\,\bm{u})
    + \tau_t\,\mathrm{KL}(\Tplan^{\!\top}\bm{1}\,\|\,\bm{\mu}),
  \label{eq:auot}
\end{multline}
where larger $\tau_s$ and $\tau_t$ impose tighter student and teacher
marginals, respectively. We use a small $\tau_s$ to attenuate unreliable
student gradients and a large $\tau_t$ to prevent the plan from strongly
downweighting difficult teacher content.

\noindent\textbf{Asymmetric Sinkhorn iterations.}
In the scaling algorithm~\cite{chizat2018unbalanced}, the asymmetry simply
assigns separate damping factors to the row and column updates. Defining
\begin{equation}
  \rho_s = \frac{\tau_s}{\tau_s + \epsilon},\qquad
  \rho_t = \frac{\tau_t}{\tau_t + \epsilon},
  \label{eq:rho}
\end{equation}
the log-domain scaling updates read
\begin{align}
  f_i^{(\ell+1)} &= \rho_s\Big(\log u_i
    - \log\!\sum\nolimits_j
      \exp\big(\log K_{ij}+g_j^{(\ell)}\big)\Big),
  \label{eq:asink_f}\\[2pt]
  g_j^{(\ell+1)} &= \rho_t\Big(\log \mu_j
    - \log\!\sum\nolimits_i
      \exp\big(\log K_{ij}+f_i^{(\ell+1)}\big)\Big),
  \label{eq:asink_g}
\end{align}
where $K_{ij}=\exp(-C_{ij}/\epsilon)$ is the Gibbs kernel,
$\bm{f}\in\mathbb{R}^I$ and $\bm{g}\in\mathbb{R}^J$ are log-domain
scaling variables (equivalently, $\epsilon$-rescaled dual potentials), and
the plan is recovered as $T^*_{ij}=\exp(f_i+g_j+\log K_{ij})$.
The updates follow a Gauss--Seidel order: the source scaling is updated from
$\bm{g}^{(\ell)}$, after which the target scaling is updated using the newly
computed $\bm{f}^{(\ell+1)}$. When $\rho_s=\rho_t=1$,
Eqs.~\eqref{eq:asink_f}--\eqref{eq:asink_g} reduce to the standard balanced
Sinkhorn iterations.
When $\rho_s$ is small (small $\tau_s$), the row update applies only a
fraction of the correction, allowing the source marginal to deviate from
$\bm{u}$; when $\rho_t$ is close to~1 (large $\tau_t$), the column update
nearly enforces $\bm{\mu}$, keeping the target marginal almost exact.
Setting $\tau_s\!=\!\tau_t$ recovers the symmetric case, and sending both
$\tau_s,\tau_t\!\to\!\infty$ recovers balanced OT
(Proposition~\ref{prop:limit}). This update is the asymmetric counterpart of
the generalized Sinkhorn scaling of~\cite{chizat2018unbalanced}.

\noindent\textbf{Gradient modulation.}
Equation~\eqref{eq:meanpull} shows that the gradient scale is proportional to
$r_i=\sum_jT^*_{ij}$. A student spatial token without a reliable match can
have $r_i<u_i$, attenuating its OT gradient, while the tight teacher marginal
maintains coverage of teacher tokens. The asymmetry thus encodes the roles of
the evolving student and fixed teacher directly in the plan.

\subsection{$\ell_1$ Ground Cost: Addressing \emph{How to Aggregate}}
\label{sec:l1}

We replace the squared-$\ell_2$ ground cost with the $\ell_1$ (Manhattan)
cost $d(\bm{a},\bm{b})=\|\bm{a}-\bm{b}\|_1$. By the envelope property, the
first-order derivative is evaluated at the optimal plan while holding that
plan fixed. Because the $\ell_1$ norm is
non-smooth, its sub-gradient is the component-wise sign function:
$\partial d/\partial\bm{a}=\mathrm{sign}(\bm{a}-\bm{b})$. Substituting into
the OT gradient~\eqref{eq:otgrad} gives
\begin{equation}
  \nabla_{\bm{a}_i}W
  = \sum_{j=1}^{J} T^*_{ij}\,\mathrm{sign}(\bm{a}_i-\bm{b}_j).
  \label{eq:medianpull}
\end{equation}
For a fixed transport plan, changing the ground cost changes the stationary
target from the weighted mean to the weighted median:

\begin{proposition}[$\ell_1$ cost pulls toward the weighted median]
\label{prop:median}
Fix a plan $\Tplan$, a row with positive transported mass
$r_i=\sum_jT_{ij}>0$, and a coordinate $d\in\{1,\ldots,D\}$. The convex
function $\varphi(a)=\sum_{j} T_{ij}\,|a - b_j^{(d)}|$ is minimized exactly
at the \emph{weighted medians} of $\{b_j^{(d)}\}$ with weights $\{T_{ij}\}$;
in particular, away from ties the gradient~\eqref{eq:medianpull} vanishes if
and only if the transported mass on the two sides of $a$ is equal.
\end{proposition}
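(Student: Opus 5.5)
The plan is to reduce the claim to a one-dimensional convex-analysis fact about weighted absolute deviations. Fix the row $i$ and the coordinate $d$, and write $w_j=T_{ij}\ge 0$ and $\beta_j=b_j^{(d)}$, so that $\varphi(a)=\sum_j w_j|a-\beta_j|$ with $\sum_j w_j=r_i>0$. Each term is convex and piecewise linear, so $\varphi$ is convex and piecewise linear with breakpoints only at the $\beta_j$. It is also coercive: $\varphi(a)\ge r_i\,|a|-\sum_j w_j|\beta_j|$. Hence the set of minimizers is nonempty, and it is a closed interval (possibly a single point).

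The main step is to compute the subdifferential explicitly. Define
\begin{equation*}
  L(a)=\sum_{j:\beta_j<a}w_j,\qquad
  E(a)=\sum_{j:\beta_j=a}w_j,\qquad
  U(a)=\sum_{j:\beta_j>a}w_j .
\end{equation*}
Using the subdifferential sum rule for finite sums of convex functions, and $\partial|x|=\{\mathrm{sign}(x)\}$ for $x\neq0$ with $\partial|x|=[-1,1]$ at $x=0$, we get
\begin{equation*}
  \partial\varphi(a)=\bigl[L(a)-U(a)-E(a),\;L(a)-U(a)+E(a)\bigr].
\end{equation*}
For a convex function, $a$ is a minimizer exactly when $0\in\partial\varphi(a)$. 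This condition reads $|L(a)-U(a)|\le E(a)$, which is equivalent to $L(a)\le r_i/2$ and $U(a)\le r_i/2$. That pair of inequalities is precisely the definition of a weighted median of $\{\beta_j\}$ with weights $\{w_j\}$. This yields the first assertion: $\varphi$ is minimized exactly at the weighted medians.

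For the second assertion, we specialize to points $a$ away from ties, meaning $a\neq\beta_j$ for every $j$ with $w_j>0$. Then $E(a)=0$, so $\varphi$ is differentiable at $a$ and $\varphi'(a)=L(a)-U(a)$. This derivative is exactly the $d$-th coordinate of \eqref{eq:medianpull}, because $\mathrm{sign}(a-\beta_j)$ equals $+1$ on the ``below'' side and $-1$ on the ``above'' side. So that coordinate of the gradient vanishes if and only if $L(a)=U(a)$, that is, the transported mass on the two sides of $a$ is equal. Applying this coordinatewise gives the vector statement for \eqref{eq:medianpull}.

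I expect the main obstacle to be purely a matter of bookkeeping. One must state the convention $\mathrm{sign}(0)\in[-1,1]$ (the subgradient convention) so that the first assertion covers ties, and one must handle zero weights. Terms with $w_j=0$ contribute nothing to $\varphi$, so $\varphi$ has no kink at their $\beta_j$. They must therefore be excluded from the ``away from ties'' condition; otherwise the equivalence would appear to fail at such spurious breakpoints.
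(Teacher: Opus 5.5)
Your proposal is correct and takes the same route as the paper. The paper's proof just notes that $\varphi$ is convex and piecewise linear and that $0\in\partial\varphi(a)$ is exactly the weighted-median condition; you supply the details it leaves implicit (the explicit subdifferential $[L-U-E,\,L-U+E]$, the equivalence $|L-U|\le E \iff L\le r_i/2 \text{ and } U\le r_i/2$, and the exclusion of zero-weight breakpoints from the ``away from ties'' clause), and all of these are correct.
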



\noindent\textbf{Why the median is robust under diversity.}
For a fixed plan, the weighted mean is sensitive to the magnitude of distant
targets, so a small amount of transported mass can exert a disproportionately
large pull. The coordinate-wise weighted median instead depends on how mass
is ordered around each coordinate and has bounded sensitivity to target
magnitude. When most transported mass forms a coherent cluster, the median
remains governed by that cluster rather than being displaced toward an
extreme target. The $\ell_1$ cost thus reduces averaging across incompatible
content, complementing partial matching: the latter reduces the influence of
unreliable pairs, while the former limits the influence of diverse targets
that remain in the plan.

\noindent\textbf{Behavior across conditioning strengths and final objective.}
Strong conditions align the spatial-token sets of corresponding student and
teacher frames, so the optimized marginals are expected to remain close to
their prescribed values and weighted means and medians become similar. The
relaxed objective therefore
approaches the behavior of balanced OTD when correspondences are reliable,
while remaining robust as the valid output space widens
(Sec.~\ref{sec:ablation}). The final matching objective is
\begin{equation}
  \mathcal{L}_{\mathrm{match}}(\theta)
  = w_{\mathrm{DMD}}\,\mathcal{L}_{\mathrm{DMD}}(\theta)
    + w_{\mathrm{OT}}\,\mathcal{L}_{\mathrm{OTD}}(\theta),
  \label{eq:match}
\end{equation}
where $\mathcal{L}_{\mathrm{OTD}}$ uses the asymmetric partial plan of
Eq.~\eqref{eq:auot} with the $\ell_1$ cost; the weights are shared across
all experiments (Sec.~\ref{sec:setup}).

\subsection{Adversarial Discriminator}
\label{sec:gan}

Relying on distribution matching alone, however, leaves score-approximation errors that surface as texture artifacts. We therefore introduce a relativistic
adversarial objective.

\noindent\textbf{Discriminator architecture.}
We adopt the register-token discriminator design of APT~\cite{lin2025apt},
also used by Self-Forcing~\cite{huang2025selfforcing}. A lightweight head
$D_\omega$ embeds register tokens in $F_\phi$, cross-attends to selected
backbone features, and predicts one logit without a separate encoder.
Architecture details are given in Appendix~\ref{sec:supp_impl}.

\noindent\textbf{Relativistic GAN losses.}
Following the relativistic framework~\cite{jolicoeur2018relativistic}, the
generator and discriminator losses are defined relative to the difference
between real and fake logits:
\begin{align}
  \mathcal{L}_{\mathrm{GAN\text{-}G}}(\theta)
  &= \mathbb{E}_{\bm{z},t}\!\Big[
    \softplus\!\big(D_\omega(\bm{x}_t^{\real},t)-D_\omega(\bm{x}_t^{\fake},t)\big)
  \Big],
  \label{eq:gan_g}\\[3pt]
  \mathcal{L}_{\mathrm{GAN\text{-}D}}(\omega)
  &= \mathbb{E}_{\bm{z},t}\!\Big[
    \softplus\!\big(D_\omega(\bm{x}_t^{\fake},t)-D_\omega(\bm{x}_t^{\real},t)\big)
  \Big],
  \label{eq:gan_d}
\end{align}
where $\softplus(x)=\log(1+e^x)$, and $\bm{x}_t^{\fake}$ and
$\bm{x}_t^{\real}$ are generated and real videos noised to the same~$t$.
This formulation compares their logits within each batch.
We additionally use the finite-difference R1 penalty of APT
~\cite{mescheder2018training,lin2025apt}; its definition and parameters are
provided in Appendix~\ref{sec:supp_impl}.

\subsection{Model Learning}
\label{sec:learning}

The full training procedure operates over three networks: the generator
$G_\theta$, the fake-score network $F_\phi$ (with embedded discriminator
head $D_\omega$), and the frozen teacher score $F_\psi$. The critic is
updated at every step and the generator every $R$ critic steps, each with a
joint objective that combines its distillation and adversarial terms. We use
task-specific indicators $a_G(\mathrm{step}),a_D(\mathrm{step})\in\{0,1\}$
to encode the adversarial warm-up and update intervals for the generator and
critic, respectively. The overall generator objective is:
\begin{equation}
  \begin{aligned}
  \mathcal{L}_{G}(\theta;\mathrm{step})
  ={}& w_{\mathrm{DMD}}\,\mathcal{L}_{\mathrm{DMD}}(\theta)
    + w_{\mathrm{OT}}\,\mathcal{L}_{\mathrm{OTD}}(\theta) \\
    &+ a_G(\mathrm{step})\,w_{\mathrm{GAN\text{-}G}}\,
      \mathcal{L}_{\mathrm{GAN\text{-}G}}(\theta),
  \end{aligned}
  \label{eq:total}
\end{equation}
where OTD uses Eq.~\eqref{eq:auot} with the $\ell_1$ cost and the GAN weight
is kept small so it acts as refinement. The critic adds
Eq.~\eqref{eq:gan_d} when $a_D(\mathrm{step})=1$ to the denoising loss:
\begin{equation}
  \mathcal{L}_{\text{denoise}}(\phi)
  = \mathbb{E}_{\bm{z},\bm{c},t,\bm{\xi}}\!\Big[
    \big\|\hat{\bm{x}}_0 - F_\phi(\bm{x}_t,t)\big\|_2^2
  \Big],
  \label{eq:denoise}
\end{equation}
The transport plan is computed only during training. The task-specific
definitions of $a_G$ and $a_D$, including their warm-up and update intervals,
are provided in Appendix~\ref{sec:supp_impl}. Algorithm~\ref{alg:training}
summarizes the resulting framework.

\noindent\textbf{Fused updates under memory constraints.}
\label{sec:gradfuse}
At the 14B scale, we avoid retaining two live computation graphs by
backpropagating the distillation loss, freeing its graph, and then re-running
the same sample for the adversarial loss on scheduled adversarial updates.
The two gradients accumulate before one optimizer step, preserving the joint
objective with the peak memory of a single backward pass. The critic is
handled analogously. This
\emph{gradient-fused} schedule replaces the alternating updates of the
conference version~\cite{wang2026vdot} and is ablated in Sec.~\ref{sec:ablation}.

\begin{algorithm}[t]
\SetAlgoLined
\KwIn{Generator $G_\theta$, teacher $F_\psi$, fake-score $F_\phi$
with discriminator head $D_\omega$, critic-update ratio $R$, task-specific
adversarial schedules $a_G,a_D$}
\For{$\mathrm{step}=1,2,\ldots$}{
  Sample data $(\bm{c},\bm{x}^{\real})$ and noise $\bm{z}$\;
  \tcp{Generator update}
  \If{$\mathrm{step} \bmod R = 0$}{
    $\hat{\bm{x}}_0 \leftarrow G_\theta(\bm{z},\bm{c})$\;
    Compute $\Topt_{t,q}$ for every frame $q$ via Sinkhorn
      (Eqs.~\ref{eq:asink_f}--\ref{eq:asink_g})\;
    $\mathcal{L}_{\text{match}}$ (Eq.~\ref{eq:match})$.\text{backward}()$;
      free graph\;
    \If{$a_G(\mathrm{step})=1$}{
      Re-run $\hat{\bm{x}}_0 \leftarrow G_\theta(\bm{z},\bm{c})$
        \tcp*{same $\bm{z}$}
      $w_{\text{GAN-G}}\,\mathcal{L}_{\text{GAN-G}}$
        (Eq.~\ref{eq:gan_g})$.\text{backward}()$ \tcp*{accum.}
    }
    Update $G_\theta$\;
  }
  \tcp{Critic update}
  $\hat{\bm{x}}_0 \leftarrow G_\theta(\bm{z},\bm{c})$ \tcp*{no grad}
  $\mathcal{L}_{\text{denoise}}$ (Eq.~\ref{eq:denoise})$.\text{backward}()$;
    free graph\;
  \If{$a_D(\mathrm{step})=1$}{
    $\big(w_{\text{GAN-D}}\,\mathcal{L}_{\text{GAN-D}}
      \text{ (Eq.~\ref{eq:gan_d})}
      + \mathcal{L}_{\mathrm{R1}}\text{ (Appendix~\ref{sec:supp_impl})}\big)
      .\text{backward}()$ \tcp*{accum.}
  }
  Update $F_\phi, D_\omega$\;
}
\caption{Training procedure of \methodpp.}
\label{alg:training}
\end{algorithm}

\begin{table*}[t]
\caption{
\textbf{Quantitative comparison on representative single-condition tasks of
\methodbench}. We bold the \textbf{best results} and underline the
\underline{second-best results}.
}

\centering
\footnotesize
\setlength{\tabcolsep}{5.8pt}
\begin{tabular}{l|l|c|c|ccccccc}

\toprule

Type & Method & Base Model & \#NFE$\downarrow$ & \multicolumn{7}{c}{\textbf{Video Quality \& Video Consistency}} 
 \\

& & & & \rotatebox{90}{\begin{minipage}{1.8cm} {\tikz\fill[video_quality] (0,0) circle (.5ex);} \centering Aesthetic Quality\end{minipage}}
& \rotatebox{90}{\begin{minipage}{1.8cm} {\tikz\fill[video_quality] (0,0) circle (.5ex);} \centering Background Consistency\end{minipage}}
& \rotatebox{90}{\begin{minipage}{1.8cm} {\tikz\fill[video_quality] (0,0) circle (.5ex);} \centering Dynamic Degree\end{minipage}}
& \rotatebox{90}{\begin{minipage}{1.8cm} {\tikz\fill[video_quality] (0,0) circle (.5ex);} \centering Imaging Quality\end{minipage}}
& \rotatebox{90}{\begin{minipage}{1.8cm} {\tikz\fill[video_quality] (0,0) circle (.5ex);} \centering Motion Smoothness\end{minipage}}
& \rotatebox{90}{\begin{minipage}{1.8cm} {\tikz\fill[video_quality] (0,0) circle (.5ex);} \centering Subject Consistency\end{minipage}}
& \rotatebox{90}{\begin{minipage}{1.8cm} {\tikz\fill[score_average] (0,0) circle (.5ex);} \centering Normalized Average\end{minipage}}
 \\

\midrule 

\multirow{6}{*}{Pose} 
& ControlVideo~\cite{zhang2023controlvideo} &SD-1.5  &100  &63.32\%  &\textbf{96.82\%}  &10.00\%  &69.88\%  &98.50\%  &94.45\%  &72.16\%   \\
& Follow-Your-Pose~\cite{ma2024follow} &SD-1.4  &50  &50.36\%  &88.61\%  &40.00\%  &68.73\%  &91.78\%  &77.81\%  &69.55\%  \\ 
& VACE~\cite{jiang2025vace} &LTX-0.9B  &80  &59.72\%  &95.67\%  &45.00\%  &69.12\%  &98.10\%  &\textbf{94.80\%}  &77.06\%  \\
& VACE~\cite{jiang2025vace} &Wan-14B  &100  &\textbf{64.99\%} &95.48\%  &\underline{55.00\%}  &69.15\%  &\underline{98.64\%}  &94.07\%  &79.56\%   \\
& \cellcolor{tabhighlight}\methodbf (Ours)  &\cellcolor{tabhighlight}Wan-14B  &\cellcolor{tabhighlight}4  &\cellcolor{tabhighlight}63.50\%  &\cellcolor{tabhighlight}95.88\%  &\cellcolor{tabhighlight}\textbf{60.00\%}  &\cellcolor{tabhighlight}\textbf{70.65\%}  &\cellcolor{tabhighlight}\textbf{98.69\%}  &\cellcolor{tabhighlight}\underline{94.53\%}  &\cellcolor{tabhighlight}\underline{80.54\%}   \\

& \cellcolor{tabhighlight}\methodppbf (Ours)  &\cellcolor{tabhighlight}Wan-14B  &\cellcolor{tabhighlight}4  &\cellcolor{tabhighlight}\underline{64.97\%}  &\cellcolor{tabhighlight}\underline{95.94\%}  &\cellcolor{tabhighlight}\textbf{60.00\%} &\cellcolor{tabhighlight}\underline{70.14\%}  &\cellcolor{tabhighlight}98.55\%  &\cellcolor{tabhighlight}93.73\%  &\cellcolor{tabhighlight}\textbf{80.55\%}   \\

\midrule 

\multirow{5}{*}{Flow} 
& FLATTEN~\cite{cong2023flatten} &SD-2.1  &100  &\textbf{63.86\%}  &\textbf{96.68\%}  &60.00\%  &53.28\%  &96.35\%  &\underline{93.39\%}  &77.26\%   \\ 
& VACE~\cite{jiang2025vace} &LTX-0.9B  &80  &55.47\%  &95.97\%  &65.00\%  &57.51\%  &97.62\%  &92.90\%  &77.41\%  \\
& VACE~\cite{jiang2025vace} &Wan-14B  &100  &\underline{62.78\%}  &95.78\%  &\textbf{75.00\%}  &58.01\%  &97.77\%  &92.75\%  &\textbf{80.35\%}   \\
& \cellcolor{tabhighlight}\methodbf (Ours)  &\cellcolor{tabhighlight}Wan-14B  &\cellcolor{tabhighlight}4  &\cellcolor{tabhighlight}\underline{62.78\%}  &\cellcolor{tabhighlight}\underline{96.36\%}  &\cellcolor{tabhighlight}\underline{70.00\%}  &\cellcolor{tabhighlight}\textbf{59.22\%}  &\cellcolor{tabhighlight}\textbf{98.84\%} &\cellcolor{tabhighlight}\textbf{93.93\%}  &\cellcolor{tabhighlight}80.18\%  \\

& \cellcolor{tabhighlight}\methodppbf (Ours)  &\cellcolor{tabhighlight}Wan-14B  &\cellcolor{tabhighlight}4  &\cellcolor{tabhighlight}61.52\%  &\cellcolor{tabhighlight}95.63\%  &\cellcolor{tabhighlight}\textbf{75.00\%} &\cellcolor{tabhighlight}\underline{58.23\%}  &\cellcolor{tabhighlight}\underline{98.23\%}  &\cellcolor{tabhighlight}93.11\%  &\cellcolor{tabhighlight}\underline{80.29\%}   \\

\midrule 

\multirow{4}{*}{Grey} 
& VACE~\cite{jiang2025vace} &LTX-0.9B  &80  &60.64\%  &98.09\%  &5.00\%  &58.29\%  &\textbf{99.37\%}  &97.14\%  &69.75\%  \\
& VACE~\cite{jiang2025vace} &Wan-14B  &100  &\underline{63.76\%}  &\underline{98.11\%}  &\textbf{15.00\%}  &\textbf{60.90\%}  &99.30\%  &97.56\%  &\textbf{72.44\%}   \\
& \cellcolor{tabhighlight}\methodbf (Ours)  &\cellcolor{tabhighlight}Wan-14B  &\cellcolor{tabhighlight}4  &\cellcolor{tabhighlight}62.13\%  &\cellcolor{tabhighlight}\underline{98.11\%}  &\cellcolor{tabhighlight}\textbf{15.00\%}  &\cellcolor{tabhighlight}\underline{60.60\%}  &\cellcolor{tabhighlight}\underline{99.34\%}  &\cellcolor{tabhighlight}\underline{97.62\%}  &\cellcolor{tabhighlight}72.13\%   \\

& \cellcolor{tabhighlight}\methodppbf (Ours)  &\cellcolor{tabhighlight}Wan-14B  &\cellcolor{tabhighlight}4  &\cellcolor{tabhighlight}\textbf{63.84\%}  &\cellcolor{tabhighlight}\textbf{98.36\%}  &\cellcolor{tabhighlight}\textbf{15.00\%} &\cellcolor{tabhighlight}60.25\%  &\cellcolor{tabhighlight}99.33\%  &\cellcolor{tabhighlight}\textbf{97.65\%}  &\cellcolor{tabhighlight}\underline{72.41\%}   \\

\midrule 

\multirow{5}{*}{Outpaint} 
& Follow-Your-Canvas~\cite{chen2024follow}  &SD-2.1  &80  &51.98\%  &97.13\%  &25.00\%  &66.95\%  &98.92\%  &95.55\%  &72.59\%   \\ 
& VACE~\cite{jiang2025vace} &LTX-0.9B  &80  &58.78\%  &98.07\%  &25.00\%  &70.74\%  &\textbf{99.20\%}  &\underline{97.35\%}  &74.86\%  \\
& VACE~\cite{jiang2025vace} &Wan-14B  &100  &\underline{62.66\%}  &\underline{98.13\%}  &25.00\%  &70.56\%  &\underline{99.18\%}  &\textbf{97.41\%}  &75.49\%  \\
& \cellcolor{tabhighlight}\methodbf (Ours)  &\cellcolor{tabhighlight}Wan-14B  &\cellcolor{tabhighlight}4  &\cellcolor{tabhighlight}\textbf{63.18\%}  &\cellcolor{tabhighlight}\textbf{98.22\%}  &\cellcolor{tabhighlight}\underline{30.00\%}  &\cellcolor{tabhighlight}\textbf{71.38\%}  &\cellcolor{tabhighlight}99.16\%  &\cellcolor{tabhighlight}97.34\%  &\cellcolor{tabhighlight}\underline{76.54\%}   \\

& \cellcolor{tabhighlight}\methodppbf (Ours)  &\cellcolor{tabhighlight}Wan-14B  &\cellcolor{tabhighlight}4  &\cellcolor{tabhighlight}60.95\%  &\cellcolor{tabhighlight}97.99\%  &\cellcolor{tabhighlight}\textbf{40.00\%} &\cellcolor{tabhighlight}\underline{71.33\%}  &\cellcolor{tabhighlight}99.15\%  &\cellcolor{tabhighlight}96.83\%  &\cellcolor{tabhighlight}\textbf{77.71\%}   \\

\midrule

\multirow{6}{*}{R2V} 
& Keling1.6 &-  &-  &63.61\%  &93.85\%  &\textbf{90.00\%}  &64.46\%  &\textbf{98.95\%}  &90.11\%  &\textbf{83.50\%}   \\
 &Vidu2.0 &-  &-  &60.79\%  &93.22\%  &\underline{85.00\%}  &54.26\%  &97.54\%  &87.42\%  &79.71\%   \\
 &VACE~\cite{jiang2025vace} &LTX-0.9B  &80  &50.18\%  &91.31\%  &50.00\%  &55.88\%  &\underline{98.84\%}  &85.49\%  &71.95\%   \\
 &VACE~\cite{jiang2025vace} &Wan-14B  &100  &67.39\%  &\underline{95.39\%}  &80.00\%  &62.82\%  &97.96\%  &\underline{91.70\%}  &82.54\%  \\
& \cellcolor{tabhighlight}\methodbf (Ours)  &\cellcolor{tabhighlight}Wan-14B  &\cellcolor{tabhighlight}4  &\cellcolor{tabhighlight}\textbf{69.70\%}  &\cellcolor{tabhighlight}94.34\%  &\cellcolor{tabhighlight}70.00\%  &\cellcolor{tabhighlight}\underline{65.93\%}  &\cellcolor{tabhighlight}98.15\%  &\cellcolor{tabhighlight}89.88\%  &\cellcolor{tabhighlight}81.32\%   \\

& \cellcolor{tabhighlight}\methodppbf (Ours)  &\cellcolor{tabhighlight}Wan-14B  &\cellcolor{tabhighlight}4  &\cellcolor{tabhighlight}\underline{69.33\%}  &\cellcolor{tabhighlight}\textbf{97.45\%}  &\cellcolor{tabhighlight}70.00\% &\cellcolor{tabhighlight}\textbf{67.04\%}  &\cellcolor{tabhighlight}98.73\%  &\cellcolor{tabhighlight}\textbf{95.08\%}  &\cellcolor{tabhighlight}\underline{82.94\%}   \\

%
\bottomrule
\end{tabular}
\label{tab:comp1}
\end{table*}

\begin{table*}[t]
\caption{
\textbf{Quantitative evaluations on VBench (T2V)}. We compare \methodpp\ with
the many-step teacher and few-step baselines. We bold the \textbf{best results} and underline the \underline{second-best results}.
}
\centering
\footnotesize
\setlength{\tabcolsep}{6.5pt}
\begin{tabular}{l|c|c|cccccc|c}

\toprule

Method & Base Model & \#NFE$\downarrow$ & \multicolumn{7}{c}{\textbf{Video Quality \& Video Consistency}} \\

& & & \rotatebox{90}{\begin{minipage}{1.8cm} {\tikz\fill[video_quality] (0,0) circle (.5ex);} \centering Aesthetic Quality\end{minipage}}
& \rotatebox{90}{\begin{minipage}{1.8cm} {\tikz\fill[video_quality] (0,0) circle (.5ex);} \centering Background Consistency\end{minipage}}
& \rotatebox{90}{\begin{minipage}{1.8cm} {\tikz\fill[video_quality] (0,0) circle (.5ex);} \centering Dynamic Degree\end{minipage}}
& \rotatebox{90}{\begin{minipage}{1.8cm} {\tikz\fill[video_quality] (0,0) circle (.5ex);} \centering Imaging Quality\end{minipage}}
& \rotatebox{90}{\begin{minipage}{1.8cm} {\tikz\fill[video_quality] (0,0) circle (.5ex);} \centering Motion Smoothness\end{minipage}}
& \rotatebox{90}{\begin{minipage}{1.8cm} {\tikz\fill[video_quality] (0,0) circle (.5ex);} \centering Subject Consistency\end{minipage}}
& \rotatebox{90}{\begin{minipage}{1.8cm} {\tikz\fill[score_average] (0,0) circle (.5ex);} \centering Normalized Average\end{minipage}} \\

\midrule

VideoCrafter2~\cite{chen2024videocrafter2} &1.4B &50$\times$2 &66.34 &\textbf{98.48} &17.48 &\underline{68.46} &98.42 &\underline{98.16} &74.56 \\
LTX-Video~\cite{hacohen2024ltx} &13B &40 &\underline{66.80} &\underline{97.76} &23.93 &\textbf{69.97} &\underline{99.07} &\textbf{98.42} &75.99 \\
CogVideoX1.5~\cite{cogvideox} &5B &50$\times$2 &62.81 &94.95 &46.11 &62.81 &96.96 &92.66 &76.05 \\
HunyuanVideo~\cite{kong2024hunyuanvideo} &13B &50$\times$2 &65.11 &95.16 &\underline{47.67} &65.70 &\underline{99.07} &92.66 &77.56 \\
HunyuanVideo-1.5~\cite{hunyuanvideo2025} &8.3B &50$\times$2 &65.50 &94.72 &\textbf{80.98} &67.55 &\textbf{99.10} &92.31 &\textbf{83.36} \\
Wan2.1-T2V~\cite{wan2025} &Wan-1.3B &50$\times$2 &66.26 &97.16 &33.89 &67.49 &98.52 &96.91 &76.70 \\
Wan2.1-T2V~\cite{wan2025} (teacher) &Wan-14B &50$\times$2 &\textbf{68.76} &97.49 &36.20 &68.20 &98.59 &96.99 &\underline{77.70} \\ \hline

CausVid~\cite{yin2025causvid} &Wan-14B &4 &68.54 &97.17 &34.17 &69.14 &98.36 &\textbf{97.34} &77.45 \\
AccVideo~\cite{zhang2025accvideo} &Wan-14B &4 &\underline{69.32} &\textbf{97.85} &30.06 &62.29 &\underline{98.85} &97.13 &75.92 \\
rCM~\cite{zheng2025large} &Wan-14B &4 &\textbf{69.87} &97.20 &32.54 &69.30 &98.66 &96.92 &77.42 \\
DMD2~\cite{yin2024improved} &Wan-14B &4 &68.00 &96.16 &\underline{40.66} &\underline{70.54} &98.36 &95.38 &\underline{78.18} \\
\methodbf~\cite{wang2026vdot} &Wan-14B &4 &69.03 &\underline{97.73} &32.52 &70.25 &\textbf{99.01} &\underline{97.18} &77.62 \\
\cellcolor{tabhighlight}\methodppbf\ (Ours) &\cellcolor{tabhighlight}Wan-14B &\cellcolor{tabhighlight}4 &\cellcolor{tabhighlight}69.14&\cellcolor{tabhighlight}97.22 &\cellcolor{tabhighlight}\textbf{54.91} &\cellcolor{tabhighlight}\textbf{72.83} &\cellcolor{tabhighlight}98.56 &\cellcolor{tabhighlight}96.89 &\cellcolor{tabhighlight}\textbf{81.59} \\

\bottomrule
\end{tabular}
\label{tab:t2v_main}
\end{table*}

\begin{table*}[t]
\caption{
\textbf{Quantitative evaluations on VBench-I2V}. We compare \methodpp\ with
the many-step teacher and few-step baselines. We bold the \textbf{best results} and underline the \underline{second-best results}.
}
\centering
\footnotesize
\setlength{\tabcolsep}{6.5pt}
\begin{tabular}{l|c|c|cccccc|c}

\toprule

Method & Base Model & \#NFE$\downarrow$ & \multicolumn{7}{c}{\textbf{Video Quality \& Video Consistency}} \\

& & & \rotatebox{90}{\begin{minipage}{1.8cm} {\tikz\fill[video_quality] (0,0) circle (.5ex);} \centering Aesthetic Quality\end{minipage}}
& \rotatebox{90}{\begin{minipage}{1.8cm} {\tikz\fill[video_quality] (0,0) circle (.5ex);} \centering Background Consistency\end{minipage}}
& \rotatebox{90}{\begin{minipage}{1.8cm} {\tikz\fill[video_quality] (0,0) circle (.5ex);} \centering Dynamic Degree\end{minipage}}
& \rotatebox{90}{\begin{minipage}{1.8cm} {\tikz\fill[video_quality] (0,0) circle (.5ex);} \centering Imaging Quality\end{minipage}}
& \rotatebox{90}{\begin{minipage}{1.8cm} {\tikz\fill[video_quality] (0,0) circle (.5ex);} \centering Motion Smoothness\end{minipage}}
& \rotatebox{90}{\begin{minipage}{1.8cm} {\tikz\fill[video_quality] (0,0) circle (.5ex);} \centering Subject Consistency\end{minipage}}
& \rotatebox{90}{\begin{minipage}{1.8cm} {\tikz\fill[score_average] (0,0) circle (.5ex);} \centering Normalized Average\end{minipage}} \\

\midrule

SEINE~\cite{chen2023seine} &0.9B &250$\times$2 &59.21 &95.75 &25.63 &\underline{70.55} &96.65 &95.02 &73.80 \\
I2VGen-XL~\cite{zhang2023i2vgen} &1.4B &50$\times$2 &65.45 &\textbf{97.27} &20.28 &69.49 &\underline{98.43} &\textbf{96.65} &74.60 \\
DynamiCrafter~\cite{xing2024dynamicrafter} &1.4B &50$\times$2 &\textbf{66.38} &95.39 &31.97 &69.09 &97.46 &95.39 &75.94 \\
CogVideoX1.5~\cite{cogvideox} &5B &50$\times$2 &62.20 &95.13 &34.49 &69.44 &98.35 &94.53 &75.69 \\
Step-Video-TI2V~\cite{huang2025step}  &30B &50$\times$2 &63.74 &96.12 &37.34 &68.38 &97.40 &95.36 &76.39 \\
HunyuanVideo-1.5~\cite{hunyuanvideo2025} &8.3B &50$\times$2 &\underline{65.78} &\underline{96.24} &\textbf{50.85} &\textbf{70.86} &\textbf{98.52} &\underline{96.33} &\textbf{79.76} \\
Wan2.1-I2V~\cite{wan2025} (teacher) &Wan-14B &50$\times$2 &65.03 &96.05 &\underline{42.54} &70.47 &98.19 &95.23 &\underline{77.92} \\ \hline

CausVid~\cite{yin2025causvid} &Wan-14B &4 &64.87 &95.03 &38.12 &\underline{70.85} &97.35 &94.32 &\underline{76.76} \\
AccVideo~\cite{zhang2025accvideo} &Wan-14B &4 &63.09 &\underline{96.02} &18.21 &69.28 &\underline{97.88} &\underline{95.62} &73.35 \\
DMD2~\cite{yin2024improved} &Wan-14B &4 &63.74 &95.00 &\underline{38.59} &69.64 &97.47 &94.00 &76.40 \\
\methodbf~\cite{wang2026vdot} &Wan-14B &4 &\textbf{65.42} &\textbf{96.95} &21.41 &70.07 &\textbf{98.53} &\textbf{96.76} &74.86 \\
\cellcolor{tabhighlight}\methodppbf\ (Ours) &\cellcolor{tabhighlight}Wan-14B &\cellcolor{tabhighlight}4 &\cellcolor{tabhighlight}\underline{65.25} &\cellcolor{tabhighlight}95.38 &\cellcolor{tabhighlight}\textbf{45.07} &\cellcolor{tabhighlight}\textbf{71.64} &\cellcolor{tabhighlight}97.91 &\cellcolor{tabhighlight}95.03 &\cellcolor{tabhighlight}\textbf{78.38} \\

\bottomrule
\end{tabular}
\label{tab:i2v_main}
\end{table*}

\begin{figure*}[t]
  \centering
  \includegraphics[width=1.0\textwidth]{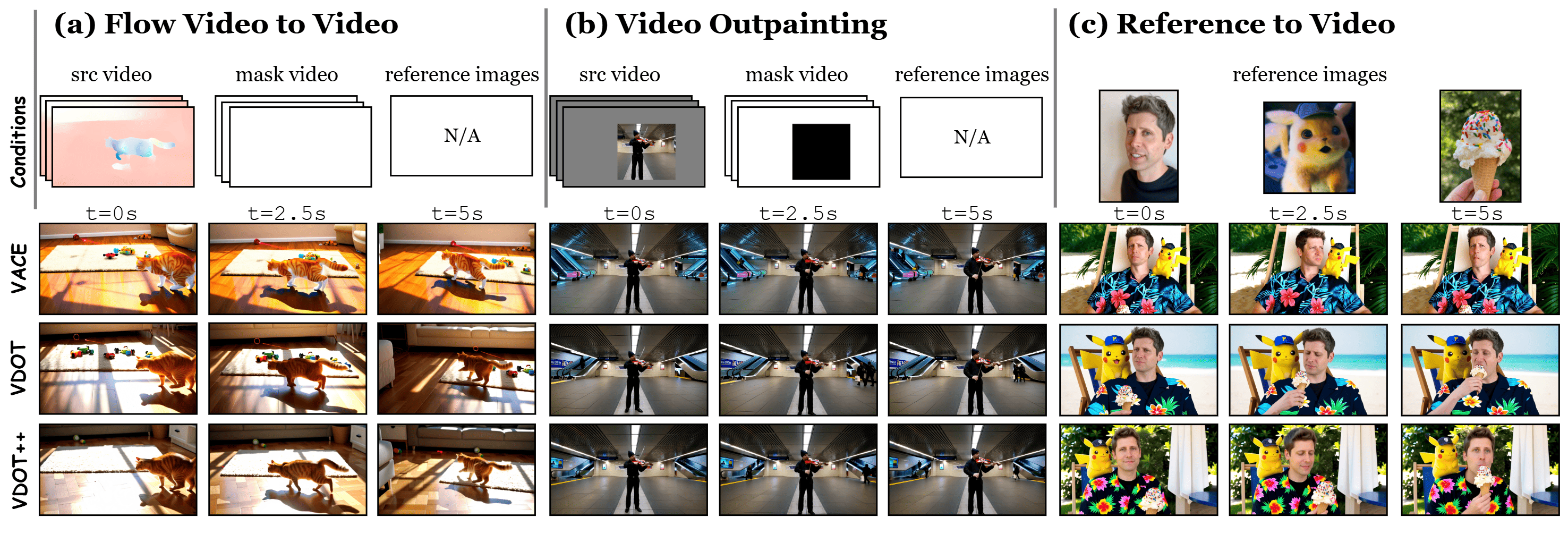}
  \caption{\textbf{Qualitative comparison on controllable video generation.}
  We compare the 100-NFE VACE teacher with 4-NFE \method\ and \methodpp\ on
  (a) flow-conditioned video-to-video generation, (b) video outpainting, and
  (c) multi-reference image-to-video generation. The top row shows the input
  conditions, and each method row shows frames at 0, 2.5, and 5 seconds.}
  \label{fig:visual1_vace}
\end{figure*}

\begin{figure*}[t]
  \centering
  \includegraphics[width=1.0\textwidth]{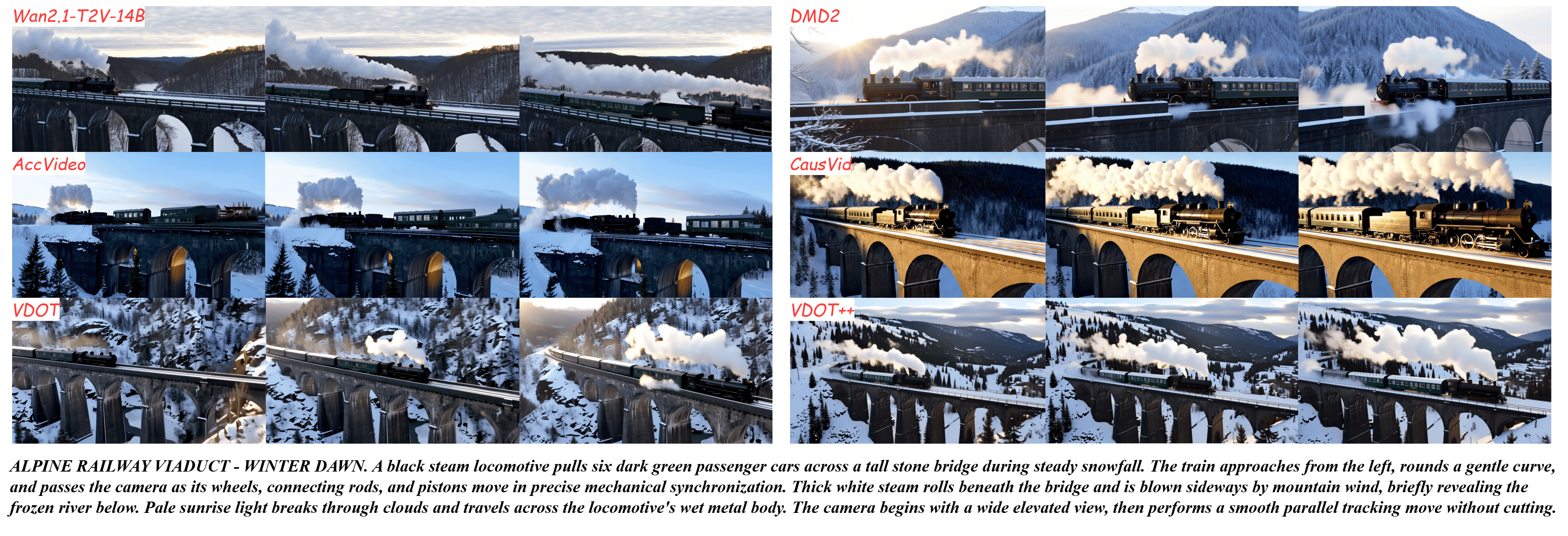}
  \caption{\textbf{Qualitative comparison on text-to-video generation.}
  Each row shows the first, middle, and last frames of the generated video.
  Zoom in for more details.}
  \label{fig:visual1_t2v}
\end{figure*}

\begin{figure*}[t]
  \centering
  \includegraphics[width=1.0\textwidth]{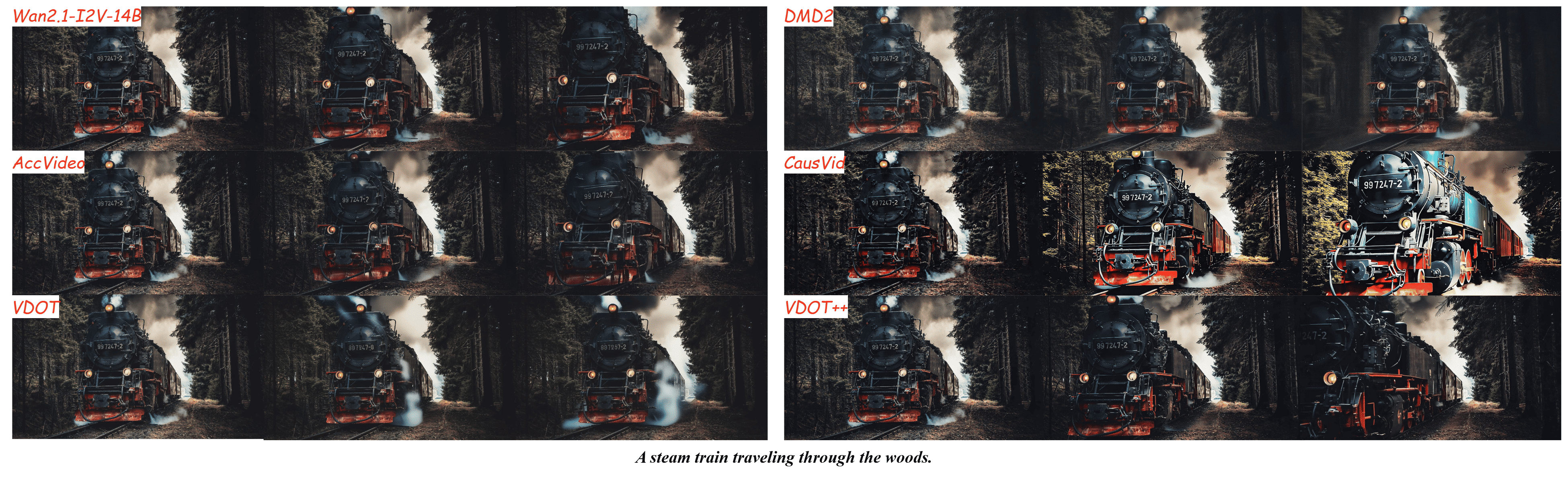}
  \caption{\textbf{Qualitative comparison on image-to-video generation.}
  Each row shows the first, middle, and last frames of the generated video.
  Zoom in for more details.}
  \label{fig:visual1_i2v}
\end{figure*}

\section{Experiments}
\label{sec:exp}

We evaluate \methodpp\ on three task families: text-to-video (T2V),
image-to-video (I2V), and condition-based (VACE-style) generation to answer
three questions. (Q1)~does the few-step \methodpp\ match or surpass many-step
baselines across all three families (Sec.~\ref{sec:main_results})?
(Q2)~does each proposed component contribute, and does the gain arise
precisely where the diagnosis of Sec.~\ref{sec:diagnosis} predicts
(Sec.~\ref{sec:ablation})? (Q3)~does the decoupled score-network design
enable cross-scale distillation (Sec.~\ref{sec:cross_scale})?

\subsection{Experimental Setup}
\label{sec:setup}

\noindent\textbf{Training setup.}
We distill VACE-Wan2.1-14B~\cite{jiang2025vace}, Wan2.1-T2V-14B, and
Wan2.1-I2V-14B~\cite{wan2025} into four-step generators with one recipe. By
default, the generator and both score networks use the 14B architecture. We
use the asymmetric $\ell_1$ objective and introduce adversarial training after
a warm-up. Appendix~\ref{sec:supp_impl} gives all optimization details. The
condition-based experiments use the multi-task corpus from our conference
version~\cite{wang2026vdot}, described in Appendix~\ref{sec:supp_data}.

\noindent\textbf{Benchmarks and baselines.}
For T2V and I2V we report VBench~\cite{huang2024vbench} and
VBench-I2V~\cite{huang2024vbench++}; for condition-based tasks we use the
VACE benchmark~\cite{jiang2025vace} and
\methodbench, introduced in our conference version~\cite{wang2026vdot}.
We report six VBench dimensions and their normalized average.
Table~\ref{tab:comp1} presents representative single-condition tasks; the
remaining results and protocols appear in Appendices~\ref{sec:supp_data}
and~\ref{sec:supp_uvcbench}. Tables~\ref{tab:comp1}--\ref{tab:i2v_main}
include many-step teachers and public models, same-backbone few-step
methods~\cite{yin2025causvid,zhang2025accvideo,zheng2025large,yin2024improved,wang2026vdot},
task-specific methods~\cite{chen2023control,zhang2023controlvideo,
ma2024follow,cong2023flatten,chen2024follow}, and commercial systems
~\cite{kling,vidu}. We implement DMD2 on Wan2.1-14B with the Self-Forcing
codebase~\cite{huang2025selfforcing}, as no official video implementation is
available.


\subsection{Main Results}
\label{sec:main_results}

With four denoising steps, \methodpp\ remains competitive with its many-step
teachers across all three task families. As conditioning weakens, its gains
concentrate on motion and imaging quality while temporal and subject
consistency remain stable. This task-dependent profile matches the motivation
for relaxing only unreliable correspondences.

\noindent\textbf{Unified evaluation on \methodbench.}
Table~\ref{tab:comp1} shows that \methodpp\ remains close to \method\ on the
dense pose, flow, and greyscale conditions, but improves the normalized average
on outpainting and R2V. In the latter tasks, observed content provides an
anchor while partial matching leaves unspecified regions and frames flexible.
The full results in Appendix~\ref{sec:supp_uvcbench} show a task-dependent
balance between appearance and dynamics: increased motion can accompany
lower imaging quality, with lower normalized averages on several dense and
composite tasks.

\noindent\textbf{Condition-based generation on the VACE benchmark.}
Table~\ref{tab:comp_vacebench} shows the same trend: gains concentrate on R2V,
outpainting, and extension, while structure-heavy tasks remain close to the
teacher. Figure~\ref{fig:visual1_vace} shows coherent motion under flow and
outpainting constraints. In multi-reference R2V, \methodpp\ combines all three
references while preserving their identities. These examples indicate that the
extra flexibility improves scene evolution without weakening adherence to the
provided controls.

\noindent\textbf{Text-to-video generation.}
On T2V (Table~\ref{tab:t2v_main}), \methodpp\ improves motion and imaging
quality while maintaining consistency, producing the best normalized average
among both few-step baselines and the many-step teacher. This agrees with the
method design: a text prompt fixes semantics but not trajectory, so partial
matching preserves plausible motion and the $\ell_1$ cost avoids averaging
distinct futures.
Figure~\ref{fig:visual1_t2v} shows clear motion without distorting the
locomotive or viaduct. Fine steam and snow details also remain stable.

\noindent\textbf{Image-to-video generation.}
I2V follows the same pattern (Table~\ref{tab:i2v_main}): the first frame anchors
appearance but not future motion, making partial matching preferable to copying
one arbitrary continuation. \methodpp\ exceeds the teacher and other few-step
methods in normalized average, with the largest gain appearing in motion rather
than static appearance preservation. Figure~\ref{fig:visual1_i2v} shows that
\methodpp\ preserves the locomotive and forest layout while generating forward
motion. It avoids the near-static behavior and appearance drift visible in
competing few-step results.

\subsection{Ablation Studies}
\label{sec:ablation}

We ablate the asymmetric partial-OT objective and adversarial training on VACE
and T2V, testing both their individual effects and the design choices within
each component.

\begin{table}[t]
  \centering
  \caption{Controlled objective factorial \emph{with adversarial
  refinement}: matching (balanced vs.\ partial) $\times$ cost ($\ell_2$
  vs.\ $\ell_1$), evaluated on the strongly-conditioned VACE tasks and the
  loosely-conditioned T2V (UVCBench and VBench normalized average).}
  \label{tab:factorial}
  \begin{tabular}{ll cc}
    \toprule
    Matching & Cost & VACE (strong)$\uparrow$ & T2V (loose)$\uparrow$ \\
    \midrule
    Balanced & $\ell_2$ & 78.31 & 77.62 \\
    Balanced & $\ell_1$ & 78.08 & 80.41 \\
    Partial  & $\ell_2$ & 78.11 &79.47  \\
    \cellcolor{tabhighlight}Partial & \cellcolor{tabhighlight}$\ell_1$ (ours)
      & \cellcolor{tabhighlight}\textbf{78.56} & \cellcolor{tabhighlight}\textbf{81.59} \\
    \bottomrule
  \end{tabular}
\end{table}

\begin{figure*}[t]
  \centering
  \includegraphics[width=\textwidth]{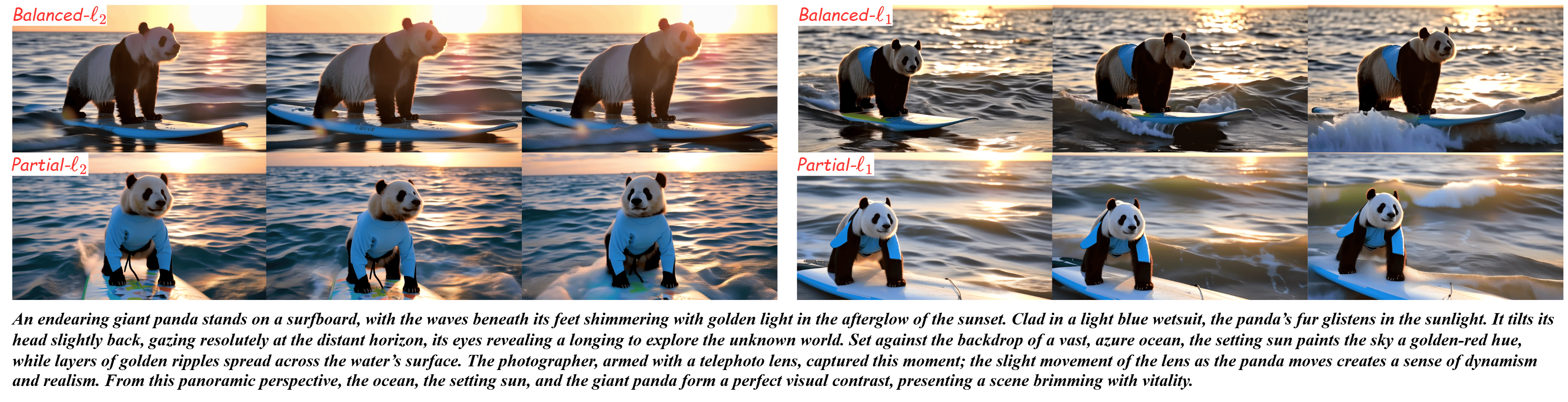}
  \caption{\textbf{Qualitative ablation of matching and ground cost on T2V.}
  Balanced and partial matching are compared with squared-$\ell_2$ and
  $\ell_1$ costs for the panda-on-a-surfboard prompt shown below the panels.
  Each variant shows three sampled frames from left to right. The example
  complements the controlled factorial in Table~\ref{tab:factorial}.}
  \label{fig:visual_aba_match}
\end{figure*}

\subsubsection{Asymmetric Partial-OT Objective}

\noindent\textbf{Controlled factorial: matching $\times$ cost $\times$
condition strength.}
Table~\ref{tab:factorial} varies matching ($\{$balanced, partial$\}$) and cost
($\{\ell_2,\ell_1\}$) while fixing adversarial refinement.
All variants remain close on strongly conditioned VACE. On T2V, partial
matching removes unreliable pairs, while $\ell_1$ avoids averaging distinct
modes; either change improves over balanced $\ell_2$, and their combination is
strongest. The contrast across task families supports the two-pathology
diagnosis: the proposed changes matter most when the condition admits diverse
valid outputs rather than acting as a uniform regularizer.

Figure~\ref{fig:visual_aba_match} illustrates the complementary effects in
a surfing example. Balanced $\ell_2$ shows little change in the panda's pose
across the sampled frames and does not clearly depict the requested blue
vest. Balanced $\ell_1$ produces more visible wave evolution, but the blue
garment is largely confined to the panda's back. Both partial variants make
the garment more apparent; partial $\ell_1$ also retains a clear torso and
surfboard outline as the surrounding waves change. This example complements
the aggregate T2V gains by showing how the combined objective can preserve
local appearance while allowing changes in pose and scene content.

\begin{table}[t]
  \centering
  \caption{Effect of marginal-relaxation direction on T2V under the
  controlled protocol of Table~\ref{tab:factorial}. $\tau_s$ and $\tau_t$ penalize deviations from the student
  and teacher marginals, respectively. Dynamic Degree measures motion
  freedom, while Imaging Quality measures visual degradation.}
  \label{tab:asym_tau}
  \footnotesize
  \setlength{\tabcolsep}{3pt}
  \begin{tabular}{l c ccc}
    \toprule
    Relaxation & $(\tau_s,\tau_t)$ & Dynamic$\uparrow$ & Imaging$\uparrow$ & Avg.$\uparrow$\\
    \midrule
    Balanced      & $(\infty,\infty)$ & 45.84 & 70.73 & 80.41\\
    Symmetric     & $(0.1,0.1)$         & 29.75 &\textbf{72.95} & 78.26\\
    Target relaxed & $(10,0.1)$         & 35.89 & 69.49 & 77.84\\
    \rowcolor{tabhighlight}
    Source relaxed (ours) & $(0.1,10)$  & \textbf{54.91} & 72.83 & \textbf{81.59}\\
    \bottomrule
  \end{tabular}
\end{table}

\begin{figure*}[t]
  \centering
  \includegraphics[width=\textwidth]{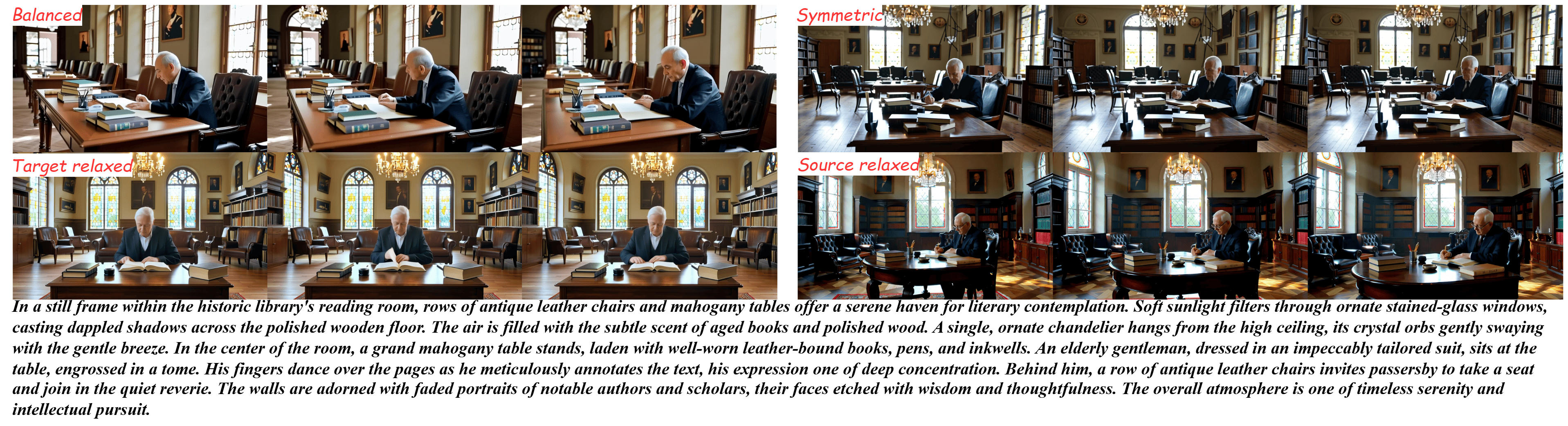}
  \caption{\textbf{Qualitative ablation of marginal-relaxation direction.}
  Three sampled frames are shown for each T2V variant under the library
  prompt. The settings are balanced OT, symmetric relaxation
  $(\tau_s,\tau_t)=(0.1,0.1)$, target-side relaxation $(10,0.1)$, and
  source-side relaxation $(0.1,10)$, corresponding to
  Table~\ref{tab:asym_tau}.}
  \label{fig:visual_aba_tau}
  \vspace{-0.15in}
\end{figure*}

\noindent\textbf{Asymmetric vs.\ symmetric relaxation.}
Table~\ref{tab:asym_tau} compares balanced, symmetric, and opposite assignments
of the two marginal penalties.
Source-side relaxation lets unreliable student tokens carry reduced mass
while retaining teacher-token coverage, yielding the highest Dynamic Degree
and normalized average. Its Imaging Quality (72.83) remains close to the
best value of 72.95 under symmetric relaxation. Symmetric and target-side relaxation instead underweight difficult
teacher tokens. Their lower averages, including the degradation from symmetric
relaxation relative to balanced OT, show that relaxing both marginals is not
sufficient; the direction of relaxation matters.

In the library example (Fig.~\ref{fig:visual_aba_tau}), source-side
relaxation preserves the seated reader, open book, and surrounding
library structure across the sampled frames, with modest changes in
framing and the reader's posture. The result illustrates
appearance preservation under relaxed student-side matching.
The higher Dynamic Degree in Table~\ref{tab:asym_tau} provides
the aggregate evidence for improved dynamics, which is less readily
assessed from these sparsely sampled frames.

\subsubsection{Training Strategy}

\begin{table}[t]
  \centering
  \caption{Effect of adversarial refinement and its update schedule on VACE
  and T2V. We report Imaging Quality and Aesthetic Quality, the two dimensions
  most directly targeted by adversarial refinement.}
  \label{tab:gan_mode}
  \footnotesize
  \setlength{\tabcolsep}{3pt}
  \begin{tabular}{l cc cc}
    \toprule
    & \multicolumn{2}{c}{VACE} & \multicolumn{2}{c}{T2V} \\
    \cmidrule(lr){2-3}\cmidrule(lr){4-5}
    Training & Imaging$\uparrow$ & Aesthetic$\uparrow$
             & Imaging$\uparrow$ & Aesthetic$\uparrow$ \\
    \midrule
    No GAN      & 64.56 & 61.99 & 69.80 & \textbf{70.25} \\
    Alternating & 65.59 & \textbf{62.71} & 69.85 & 68.44 \\
    \rowcolor{tabhighlight}
    Joint (ours) & \textbf{65.76} & 62.02 & \textbf{72.83} & 69.14 \\
    \bottomrule
  \end{tabular}
\end{table}

\noindent\textbf{Adversarial refinement and update schedule.}
Table~\ref{tab:gan_mode} compares training without a GAN, alternating updates,
and the joint objective of Sec.~\ref{sec:gradfuse}.
Adversarial refinement mainly improves imaging quality. Joint accumulation is
more stable than alternating updates because distribution matching preserves
structure while the discriminator refines local texture. Aesthetic Quality is
less consistent across schedules, supporting our use of the GAN as a weak
appearance-refinement signal rather than the primary training objective.




\subsection{Cross-Scale Distillation}
\label{sec:cross_scale}

\begin{table}[t]
  \centering
  \caption{Same-scale and cross-scale distillation on T2V, evaluated by the
  VBench normalized average.}
  \label{tab:cross_scale}
  \footnotesize
  \setlength{\tabcolsep}{3pt}
  \begin{tabular}{l c c c c}
    \toprule
    Method & Generator & Scores & \#NFE$\downarrow$ & VBench Avg.$\uparrow$ \\
    \midrule
    Wan2.1 (teacher)  & 14B  & --   & 50$\times$2 & \textbf{77.70} \\
    Wan2.1 (original) & 1.3B & --   & 50$\times$2 & 76.70 \\
    \midrule
    DMD2                   & 1.3B & 1.3B & 4 & 77.87 \\
    DMD2                   & 1.3B & 14B  & 4 & \textbf{78.15} \\
    \midrule
    \methodpp             & 1.3B & 1.3B & 4 & 79.98 \\
    \rowcolor{tabhighlight}
    \methodpp\ (Ours)     & 1.3B & 14B  & 4 & \textbf{80.51} \\
    \bottomrule
  \end{tabular}
\end{table}

\begin{figure*}[t]
  \centering
  \includegraphics[width=\textwidth]{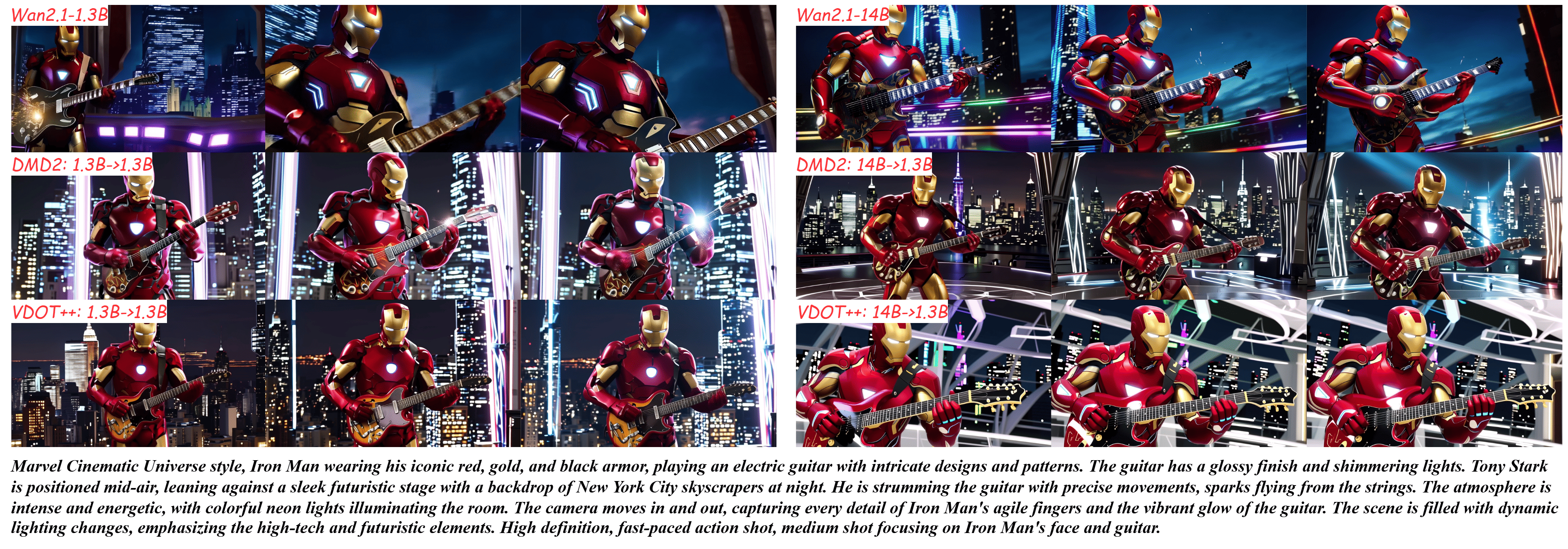}
  \caption{\textbf{Qualitative comparison of same-scale and cross-scale
  distillation on T2V.} The top row shows the original Wan2.1 models; the
  middle and bottom rows show the DMD2 baseline and \methodpp, respectively.
  In distilled variants, $A\!\rightarrow\!B$ denotes the score-network and
  generator sizes: all generators are 1.3B, with 1.3B scores on the left and
  14B scores on the right. Each variant shows three sampled frames for the
  guitar-playing prompt below the panels.}
  \label{fig:visual_aba_scale}
\end{figure*}

Because score networks are discarded after training, they can exceed the
deployed generator in size. We fix a Wan2.1-T2V-1.3B generator and compare
1.3B and 14B score networks for DMD2 and \methodpp.
As shown in Table~\ref{tab:cross_scale}, larger score networks improve the
normalized average from 77.87 to 78.15 for DMD2 and from 79.98 to 80.51 for
\methodpp. The larger gain for \methodpp\ is consistent with its use of
teacher information in both score matching and transport-based supervision.
With 14B score networks, the 1.3B \methodpp\ generator surpasses the
14B many-step teacher in normalized average. The larger networks are used only
during training; deployment remains a four-step 1.3B generator, so inference
cost is unchanged.

Figure~\ref{fig:visual_aba_scale} complements this comparison with a
guitar-playing scene. Both \methodpp\ variants retain the red-and-gold armor
and guitar across the sampled frames. With 14B score networks, the fretboard,
armor boundaries, and surrounding stage beams are clearly delineated as the
performer's pose changes. The example shows that a 1.3B generator can retain
fine foreground and background structure under cross-scale supervision;
the aggregate comparison in Table~\ref{tab:cross_scale}, rather than this
single scene, establishes the benefit of increasing score-network capacity.

\section{Conclusion}
\label{sec:conclusion}

This paper extends few-step OTD across T2V, I2V, and condition-based video
creation by accounting for their different output ambiguity. \methodpp\ uses
asymmetric unbalanced transport to suppress unreliable student
correspondences while retaining teacher coverage, and an $\ell_1$ cost to
avoid mean-based aggregation of incompatible targets. Experiments and
controlled ablations support this conditioning-dependent design: it behaves
similarly to balanced OTD under dense control, but better preserves valid
motion and detail as the output space widens. Joint adversarial refinement
improves local appearance, while cross-scale distillation uses larger auxiliary
score networks to strengthen a compact generator.

\noindent\textbf{Limitations and future work.}
The current method solves spatial-token OT independently within corresponding
temporal-frame pairs. It therefore models fine spatial correspondence but does
not transport tokens across frames or jointly optimize a spatiotemporal plan.
Spatiotemporal or trajectory-aware transport could capture motion-level
correspondence, while adaptive marginal penalties could tune relaxation from
condition strength or transport confidence. Extending the current four-step
bidirectional setting to one-step, streaming, or autoregressive generation
also requires handling stronger rollout errors and support mismatch.

\bibliographystyle{IEEEtran}
\bibliography{references}

\clearpage
\appendices

\section{Notation}
\label{sec:supp_notation}

Table~\ref{tab:notation} consolidates the symbols defined throughout the
main paper.

\begin{table}[t]
  \centering
  \caption{Main notation used throughout the paper.}
  \label{tab:notation}
  \footnotesize
  \setlength{\tabcolsep}{3pt}
  \begin{tabular}{@{}l p{0.60\columnwidth}@{}}
    \toprule
    Symbol & Meaning \\
    \midrule
    \multicolumn{2}{l}{\emph{Networks and generation}} \\
    $G_\theta$ & few-step generator (deployed at inference) \\
    $F_\psi$ / $F_\phi$ & frozen teacher (real) / trainable fake score network \\
    $D_\omega$ & discriminator head embedded in $F_\phi$ \\
    $\bm{z}$, $\bm{\xi}$ & generator-input noise; re-noising noise \\
    $\bm{c}$ & heterogeneous generation condition \\
    $\hat{\bm{x}}_0$ & generator output $G_\theta(\bm{z},\bm{c})$ \\
    $t$, $(\alpha_t,\sigma_t)$ & diffusion time; noise schedule \\
    $\bm{x}_t$ & re-noised output $\alpha_t\hat{\bm{x}}_0+\sigma_t\bm{\xi}$ \\
    \midrule
    \multicolumn{2}{l}{\emph{Optimal transport}} \\
    $\pfake$, $\preal$ & student (fake) / teacher (real) video distributions \\
    $q$ & temporal-frame index \\
    $\bm{A}_{t,q}$, $\bm{B}_{t,q}$ & student / teacher spatial-token sets
      $[\bm{a}_{t,q,i}]$ / $[\bm{b}_{t,q,j}]$ in frame $q$ at diffusion time $t$ \\
    $\bm{u}$, $\bm{\mu}$ & marginal weights in $\Delta^{I-1}$ / $\Delta^{J-1}$ (uniform) \\
    $d(\cdot,\cdot)$, $\Cmat=[C_{ij}]$ & ground metric; cost matrix $C_{ij}=d(\bm{a}_i,\bm{b}_j)$ \\
    $\Tplan_{t,q}$, $\Topt_{t,q}$ & within-frame plan between $\bm{A}_{t,q}$ and $\bm{B}_{t,q}$; optimum \\
    $\bm{r}_{t,q}$, $\bm{c}_{t,q}$ & row / column projections
      $\Topt_{t,q}\bm{1}_J$ / $(\Topt_{t,q})^{\!\top}\bm{1}_I$ \\
    $\epsilon$ & entropic regularization strength \\
    $\tau_s$, $\tau_t$ & student / teacher marginal-penalty strengths \\
    $\rho_s$, $\rho_t$ & Sinkhorn scaling factors $\tau/(\tau+\epsilon)$ \\
    $\bm{f}$, $\bm{g}$, $\bm{K}$ & dual potentials; Gibbs kernel $e^{-\Cmat/\epsilon}$ \\
    $\kappa$ & total optimized mass $\sum_{ij}T^*_{ij}$ \\
    \midrule
    \multicolumn{2}{l}{\emph{Training}} \\
    $w_{\mathrm{DMD}},w_{\mathrm{OT}}$ & distribution-matching loss weights \\
    $w_{\mathrm{GAN\text{-}G}},w_{\mathrm{GAN\text{-}D}}$ & generator / discriminator adversarial weights \\
    $R$ & critic-update ratio \\
    $\sg(\cdot)$, $\softplus(\cdot)$ & stop-gradient; softplus $\log(1+e^{x})$ \\
    \bottomrule
  \end{tabular}
\end{table}

\section{Proofs}
\label{sec:supp_proofs}

\noindent\textbf{Proof of Proposition~\ref{prop:envelope}.}
$W$ is a pointwise infimum of functions affine in $\Cmat$, hence concave.
The strictly convex entropy makes $\Topt$ unique, and Danskin's theorem
yields the gradient~\cite{peyre2019computational}.

\noindent\textbf{Proof of Proposition~\ref{prop:mean}.}
Factoring the gradient gives
\begin{equation}
\sum_j T^*_{ij}(\bm{a}_i-\bm{b}_j)
=\Big(\sum_jT^*_{ij}\Big)\bm{a}_i-\sum_jT^*_{ij}\bm{b}_j
=r_i(\bm{a}_i-\bar{\bm{b}}_i).
\end{equation}
Because $r_i>0$, it vanishes exactly at
$\bm{a}_i=\bar{\bm{b}}_i$.

\noindent\textbf{Proof of Proposition~\ref{prop:limit}.}
Balanced-feasible plans incur zero penalty, so the value $V(\tau)$
of~\eqref{eq:uot} is non-decreasing in $\tau$ and bounded above by $\Weps$.
This forces the KL penalties of $\Tplan_\tau$ to vanish as
$\tau\to\infty$. Every accumulation point of the bounded family
$\{\Tplan_\tau\}$ is therefore balanced-feasible with objective at most
$\Weps$, and hence optimal for~\eqref{eq:eot}. Strict convexity of the
entropy gives uniqueness, convergence of the whole family, and
$V(\tau)\nearrow\Weps$.

\noindent\textbf{Proof of Proposition~\ref{prop:median}.}
$\varphi$ is convex and piecewise linear. The optimality condition
$0\in\partial\varphi(a)$ is precisely the weighted-median condition.

\section{Implementation Details}
\label{sec:supp_impl}

\begin{table*}[t]
\caption{
\textbf{Additional single-condition results on \methodbench}. We bold the
\textbf{best results} and underline the \underline{second-best results}.
}

\centering
\footnotesize
\setlength{\tabcolsep}{5.8pt}
\begin{tabular}{l|l|c|c|ccccccc}

\toprule

Type & Method & Base Model & \#NFE$\downarrow$ & \multicolumn{7}{c}{\textbf{Video Quality \& Video Consistency}} 
 \\

& & & & \rotatebox{90}{\begin{minipage}{1.8cm} {\tikz\fill[video_quality] (0,0) circle (.5ex);} \centering Aesthetic Quality\end{minipage}}
& \rotatebox{90}{\begin{minipage}{1.8cm} {\tikz\fill[video_quality] (0,0) circle (.5ex);} \centering Background Consistency\end{minipage}}
& \rotatebox{90}{\begin{minipage}{1.8cm} {\tikz\fill[video_quality] (0,0) circle (.5ex);} \centering Dynamic Degree\end{minipage}}
& \rotatebox{90}{\begin{minipage}{1.8cm} {\tikz\fill[video_quality] (0,0) circle (.5ex);} \centering Imaging Quality\end{minipage}}
& \rotatebox{90}{\begin{minipage}{1.8cm} {\tikz\fill[video_quality] (0,0) circle (.5ex);} \centering Motion Smoothness\end{minipage}}
& \rotatebox{90}{\begin{minipage}{1.8cm} {\tikz\fill[video_quality] (0,0) circle (.5ex);} \centering Subject Consistency\end{minipage}}
& \rotatebox{90}{\begin{minipage}{1.8cm} {\tikz\fill[score_average] (0,0) circle (.5ex);} \centering Normalized Average\end{minipage}}
 \\

\midrule 

\multirow{6}{*}{Depth}
& Control-A-Video~\cite{chen2023control}  &SD-1.5  &100 &57.22\%  &92.89\%  &20.00\%  &66.95\%  &98.06\%  &91.85\%  &71.16\%   \\ 
& ControlVideo~\cite{zhang2023controlvideo}  &SD-1.5  &100  &\textbf{64.50\%}  &97.99\%  &5.00\%  &\textbf{71.76\%}  &98.10\%  &97.41\%  &72.46\%   \\ 
& VACE~\cite{jiang2025vace} &LTX-0.9B  &80  &58.62\%  &\textbf{98.18\%}  &30.00\%  &70.14\%  &\underline{99.29\%}  &97.81\%  &75.67\%   \\
& VACE~\cite{jiang2025vace} &Wan-14B  &100  &\underline{64.36\%}  &97.63\%  &\underline{35.00\%}  &70.29\%  &99.17\%  &97.58\%  &77.34\%   \\
& \cellcolor{tabhighlight}\methodbf (Ours)  &\cellcolor{tabhighlight}Wan-14B  &\cellcolor{tabhighlight}4  &\cellcolor{tabhighlight}64.28\%  &\cellcolor{tabhighlight}\textbf{98.18\%}  &\cellcolor{tabhighlight}\textbf{40.00\%}  &\cellcolor{tabhighlight}\underline{71.24\%}  &\cellcolor{tabhighlight}\textbf{99.35\%}  &\cellcolor{tabhighlight}\textbf{97.96\%}  &\cellcolor{tabhighlight}\textbf{78.50\%}   \\

& \cellcolor{tabhighlight}\methodppbf (Ours)  &\cellcolor{tabhighlight}Wan-14B  &\cellcolor{tabhighlight}4  &\cellcolor{tabhighlight}63.04\%  &\cellcolor{tabhighlight}97.75\%  &\cellcolor{tabhighlight}\textbf{40.00\%} &\cellcolor{tabhighlight}71.06\%  &\cellcolor{tabhighlight}\underline{99.29\%}  &\cellcolor{tabhighlight}\underline{97.85\%}  &\cellcolor{tabhighlight}\underline{78.17\%}   \\
\midrule

\multirow{5}{*}{Scribble} 
& ControlVideo~\cite{zhang2023controlvideo} &SD-1.5  &100  &\underline{54.79\%}  &96.19\%  &5.00\%  &\underline{64.30\%}  &98.53\%  &94.64\%  &68.91\%   \\
& VACE~\cite{jiang2025vace} &LTX-0.9B  &80  &44.69\%  &96.88\%  &40.00\%  &61.06\%  &\textbf{99.17\%}  &93.51\%  &72.55\%  \\
& VACE~\cite{jiang2025vace} &Wan-14B  &100  &\textbf{55.70\%}  &\underline{96.91\%}  &\textbf{55.00\%}  &58.69\%  &\underline{98.99\%}  &94.51\%  &\underline{76.63\%}   \\
& \cellcolor{tabhighlight}\methodbf (Ours)  &\cellcolor{tabhighlight}Wan-14B  &\cellcolor{tabhighlight}4  &\cellcolor{tabhighlight}54.24\%  &\cellcolor{tabhighlight}\textbf{97.13\%}  &\cellcolor{tabhighlight}\underline{50.00\%}  &\cellcolor{tabhighlight}\textbf{65.19\% } &\cellcolor{tabhighlight}98.90\%  &\cellcolor{tabhighlight}\textbf{95.20\%}  &\cellcolor{tabhighlight}\textbf{76.77\%}   \\

& \cellcolor{tabhighlight}\methodppbf (Ours)  &\cellcolor{tabhighlight}Wan-14B  &\cellcolor{tabhighlight}4  &\cellcolor{tabhighlight}53.86\%  &\cellcolor{tabhighlight}96.59\%  &\cellcolor{tabhighlight}\underline{50.00\%} &\cellcolor{tabhighlight}62.32\%  &\cellcolor{tabhighlight}98.92\%  &\cellcolor{tabhighlight}\underline{94.70\%}  &\cellcolor{tabhighlight}76.06\%   \\

\midrule

\multirow{4}{*}{Extension} 
& VACE~\cite{jiang2025vace} &LTX-0.9B  &80  &51.54\%  &\textbf{96.48\%}  &20.00\%  &62.22\%  &\textbf{99.21\%}  &\textbf{94.67\%}  &70.68\%  \\
 & VACE~\cite{jiang2025vace} &Wan-14B  &100  &57.52\%  &\underline{95.46\%}  &55.00\%  &63.36\%  &98.99\%  &\underline{92.27\%}  &77.10\% \\
& \cellcolor{tabhighlight}\methodbf (Ours)  &\cellcolor{tabhighlight}Wan-14B  &\cellcolor{tabhighlight}4  &\cellcolor{tabhighlight}\textbf{58.93\%}  &\cellcolor{tabhighlight}94.18\%  &\cellcolor{tabhighlight}\textbf{75.00\%}  &\cellcolor{tabhighlight}\underline{65.11\%} &\cellcolor{tabhighlight}\textbf{99.21\%}  &\cellcolor{tabhighlight}91.09\%  &\cellcolor{tabhighlight}\textbf{80.59\%}  \\

& \cellcolor{tabhighlight}\methodppbf (Ours)  &\cellcolor{tabhighlight}Wan-14B  &\cellcolor{tabhighlight}4  &\cellcolor{tabhighlight}\underline{58.68\%}  &\cellcolor{tabhighlight}94.51\%  &\cellcolor{tabhighlight}\underline{72.50\%} &\cellcolor{tabhighlight}\textbf{65.71\%}  &\cellcolor{tabhighlight}98.75\%  &\cellcolor{tabhighlight}91.83\%  &\cellcolor{tabhighlight}\underline{80.33\%}   \\


\bottomrule
\end{tabular}
\label{tab:uvcbench_single_supp}
\end{table*}

\begin{table*}[t]
\caption{
\textbf{Quantitative evaluations of composite tasks on \methodbench}. We compare the automated score metrics of our \method with VACE on the dimensions of video quality and video consistency.
}

\centering
\footnotesize
\setlength{\tabcolsep}{4.5pt}
\begin{tabular}{l|l|l|c|c|cccccc|c}

\toprule

Condition1 & Condition2 & Method & Base Model & \#NFE$\downarrow$ & \multicolumn{7}{c}{\textbf{Video Quality \& Video Consistency}} \\

& & & & & \rotatebox{90}{\begin{minipage}{1.8cm} {\tikz\fill[video_quality] (0,0) circle (.5ex);} \centering Aesthetic Quality\end{minipage}}
& \rotatebox{90}{\begin{minipage}{1.8cm} {\tikz\fill[video_quality] (0,0) circle (.5ex);} \centering Background Consistency\end{minipage}}
& \rotatebox{90}{\begin{minipage}{1.8cm} {\tikz\fill[video_quality] (0,0) circle (.5ex);} \centering Dynamic Degree\end{minipage}}
& \rotatebox{90}{\begin{minipage}{1.8cm} {\tikz\fill[video_quality] (0,0) circle (.5ex);} \centering Imaging Quality\end{minipage}}
& \rotatebox{90}{\begin{minipage}{1.8cm} {\tikz\fill[video_quality] (0,0) circle (.5ex);} \centering Motion Smoothness\end{minipage}}
& \rotatebox{90}{\begin{minipage}{1.8cm} {\tikz\fill[video_quality] (0,0) circle (.5ex);} \centering Subject Consistency\end{minipage}}
& \rotatebox{90}{\begin{minipage}{1.8cm} {\tikz\fill[score_average] (0,0) circle (.5ex);} \centering Normalized Average\end{minipage}} \\

\midrule 

\multirow{20}{*}{FirstFrame}& \multirow{4}{*}{Depth}
& VACE~\cite{jiang2025vace} &LTX-0.9B  &80  &64.78\%  &\textbf{96.90\%}  &40.00\%  &66.19\%  &98.96\%  &94.49\%  &76.89\%  \\
& & VACE~\cite{jiang2025vace} &Wan-14B  &100  &\textbf{68.28\%}  &96.74\%  &40.00\%  &68.43\%  &98.92\%  &94.38\%  &77.79\%  \\
& &\cellcolor{tabhighlight}\methodbf (Ours) & \cellcolor{tabhighlight}Wan-14B  & \cellcolor{tabhighlight}4  &\cellcolor{tabhighlight}67.34\% &\cellcolor{tabhighlight}96.67\% &\cellcolor{tabhighlight}\textbf{45.00\%} &\cellcolor{tabhighlight}\textbf{70.09\%} &\cellcolor{tabhighlight}98.97\% &\cellcolor{tabhighlight}\textbf{94.64\%} &\cellcolor{tabhighlight}\textbf{78.78\%}  \\
& &\cellcolor{tabhighlight}\methodppbf (Ours) & \cellcolor{tabhighlight}Wan-14B  & \cellcolor{tabhighlight}4  
&\cellcolor{tabhighlight}67.68\% 
&\cellcolor{tabhighlight}96.75\% 
&\cellcolor{tabhighlight}40.00\%
&\cellcolor{tabhighlight}67.35\%
&\cellcolor{tabhighlight}\textbf{99.05\%} &\cellcolor{tabhighlight}94.40\% &\cellcolor{tabhighlight}77.54\%  \\

\cline{2-12} 

& \multirow{4}{*}{Pose}
& VACE~\cite{jiang2025vace} &LTX-0.9B  &80  &64.75\%  &\textbf{96.29\%}  &15.00\%  &63.90\%  &\textbf{99.13\%}  &94.63\%  &72.28\%  \\
& & VACE~\cite{jiang2025vace} &Wan-14B  &100  &66.66\%  &95.91\%  &\textbf{40.00\%}  &66.99\%  &98.94\%  &\textbf{94.70\%}  &\textbf{77.19\%}   \\
& &\cellcolor{tabhighlight}\methodbf (Ours) & \cellcolor{tabhighlight}Wan-14B  & \cellcolor{tabhighlight}4  &\cellcolor{tabhighlight}66.26\% &\cellcolor{tabhighlight}96.20\% &\cellcolor{tabhighlight}30.00\% &\cellcolor{tabhighlight}\textbf{68.63\%} &\cellcolor{tabhighlight}99.08\% &\cellcolor{tabhighlight}94.22\% &\cellcolor{tabhighlight}75.73\%  \\
& &\cellcolor{tabhighlight}\methodppbf (Ours) & \cellcolor{tabhighlight}Wan-14B  & \cellcolor{tabhighlight}4  
&\cellcolor{tabhighlight}\textbf{66.78\%} 
&\cellcolor{tabhighlight}96.01\% 
&\cellcolor{tabhighlight}\textbf{40.00\%}
&\cellcolor{tabhighlight}66.76\%
&\cellcolor{tabhighlight}99.06\% &\cellcolor{tabhighlight}94.18\% &\cellcolor{tabhighlight}77.13\%  \\

\cline{2-12} 

& \multirow{4}{*}{Flow}
& VACE~\cite{jiang2025vace} &LTX-0.9B  &80  &54.38\%  &\textbf{95.19\%}  &70.00\%  &52.16\%  &\textbf{98.21\%}  &91.05\%  &76.83\%  \\
& & VACE~\cite{jiang2025vace} &Wan-14B  &100  &59.80\%  &94.66\%  &85.00\%  &53.41\%  &97.80\%  &\textbf{91.62\%}  &80.38\%  \\
& &\cellcolor{tabhighlight}\methodbf (Ours) & \cellcolor{tabhighlight}Wan-14B  & \cellcolor{tabhighlight}4  &\cellcolor{tabhighlight}59.27\% &\cellcolor{tabhighlight}95.01\% &\cellcolor{tabhighlight}85.00\% &\cellcolor{tabhighlight}\textbf{55.75\%} &\cellcolor{tabhighlight}97.92\% &\cellcolor{tabhighlight}91.58\% &\cellcolor{tabhighlight}80.75\%  \\
& &\cellcolor{tabhighlight}\methodppbf (Ours) & \cellcolor{tabhighlight}Wan-14B  & \cellcolor{tabhighlight}4  
&\cellcolor{tabhighlight}\textbf{60.24\%} 
&\cellcolor{tabhighlight}94.59\% 
&\cellcolor{tabhighlight}\textbf{90.00\%}
&\cellcolor{tabhighlight}52.75\%
&\cellcolor{tabhighlight}97.89\% &\cellcolor{tabhighlight}91.08\% &\cellcolor{tabhighlight}\textbf{81.09\%}  \\

\cline{2-12} 

& \multirow{4}{*}{Scribble}
& VACE~\cite{jiang2025vace} &LTX-0.9B  &80  &55.16\%  &96.57\%  &35.00\%  &66.52\%  &99.05\%  &94.36\%  &74.44\%  \\
& & VACE~\cite{jiang2025vace} &Wan-14B  &100  &58.26\%  &96.78\%  &\textbf{50.00\%}  &66.91\%  &98.91\%  &94.89\%  & 77.63\%  \\
& &\cellcolor{tabhighlight}\methodbf (Ours) & \cellcolor{tabhighlight}Wan-14B  & \cellcolor{tabhighlight}4  &\cellcolor{tabhighlight}57.54\% &\cellcolor{tabhighlight}96.64\% &\cellcolor{tabhighlight}\textbf{50.00\%} &\cellcolor{tabhighlight}\textbf{68.23\%} &\cellcolor{tabhighlight}98.97\% &\cellcolor{tabhighlight}94.94\% &\cellcolor{tabhighlight}\textbf{77.72\%}  \\
& &\cellcolor{tabhighlight}\methodppbf (Ours) & \cellcolor{tabhighlight}Wan-14B  & \cellcolor{tabhighlight}4  
&\cellcolor{tabhighlight}\textbf{58.33\%} 
&\cellcolor{tabhighlight}\textbf{96.91\%} 
&\cellcolor{tabhighlight}45.00\%
&\cellcolor{tabhighlight}66.91\%
&\cellcolor{tabhighlight}\textbf{99.06\%} &\cellcolor{tabhighlight}\textbf{95.04\%} &\cellcolor{tabhighlight}76.88\%  \\

\cline{2-12} 

& \multirow{4}{*}{Grey}
& VACE~\cite{jiang2025vace} &LTX-0.9B  &80  &66.45\%  &97.97\%  &10.00\%  &63.60\%  &99.34\%  &97.85\%  &72.54\%  \\
& & VACE~\cite{jiang2025vace} &Wan-14B  &100  &68.87\%  &\textbf{98.25\%}  &15.00\%  &\textbf{66.11\%}  &99.30\%  &98.02\%  & 74.25\%  \\
& &\cellcolor{tabhighlight}\methodbf (Ours) & \cellcolor{tabhighlight}Wan-14B  & \cellcolor{tabhighlight}4  &\cellcolor{tabhighlight}\textbf{69.08\%} &\cellcolor{tabhighlight}98.01\% &\cellcolor{tabhighlight}\textbf{20.00\%} &\cellcolor{tabhighlight}65.92\% &\cellcolor{tabhighlight}99.30\% &\cellcolor{tabhighlight}98.08\%  &\cellcolor{tabhighlight}\textbf{75.07\% } \\
& &\cellcolor{tabhighlight}\methodppbf (Ours) & \cellcolor{tabhighlight}Wan-14B  & \cellcolor{tabhighlight}4  
&\cellcolor{tabhighlight}68.99\% 
&\cellcolor{tabhighlight}\textbf{98.25\%} 
&\cellcolor{tabhighlight}\textbf{20.00\%}
&\cellcolor{tabhighlight}65.11\%
&\cellcolor{tabhighlight}\textbf{99.36\%} &\cellcolor{tabhighlight}\textbf{98.16\%} &\cellcolor{tabhighlight}74.98\%  \\

\midrule 

\multirow{20}{*}{Reference}& \multirow{4}{*}{Depth}
& VACE~\cite{jiang2025vace} &LTX-0.9B  &80  &52.92\%  &87.63\%  &60.00\%  &63.22\%  &\textbf{98.82\%}  &77.83\%  &73.40\%  \\
& & VACE~\cite{jiang2025vace} &Wan-14B  &100  &70.07\%  &96.53\%  &\textbf{65.00\%}  &67.61\%  &98.39\%  &94.97\%  & \textbf{82.09\%}  \\
& &\cellcolor{tabhighlight}\methodbf (Ours) & \cellcolor{tabhighlight}Wan-14B  & \cellcolor{tabhighlight}4  &\cellcolor{tabhighlight}69.49\% &\cellcolor{tabhighlight}96.64\% &\cellcolor{tabhighlight}\textbf{65.00\%} &\cellcolor{tabhighlight}\textbf{67.85\%} &\cellcolor{tabhighlight}98.44\% &\cellcolor{tabhighlight}95.05\% &\cellcolor{tabhighlight}82.08\%  \\
& &\cellcolor{tabhighlight}\methodppbf (Ours) & \cellcolor{tabhighlight}Wan-14B  & \cellcolor{tabhighlight}4  
&\cellcolor{tabhighlight}\textbf{70.44\%} 
&\cellcolor{tabhighlight}\textbf{96.71\%} 
&\cellcolor{tabhighlight}60.00\%
&\cellcolor{tabhighlight}66.00\%
&\cellcolor{tabhighlight}98.71\% &\cellcolor{tabhighlight}\textbf{95.12\%} &\cellcolor{tabhighlight}81.33\%  \\

\cline{2-12} 

& \multirow{4}{*}{Pose}
& VACE~\cite{jiang2025vace} &LTX-0.9B  &80  &49.89\%  &85.24\%  &65.00\%  &52.74\%  &\textbf{98.43\%}  &71.46\%  &70.46\%  \\
& & VACE~\cite{jiang2025vace} &Wan-14B  &100  &\textbf{71.05\%}  &95.38\%  &\textbf{75.00\%}  &67.72\%  &97.77\%  &93.40\%  & 83.38\%  \\
& &\cellcolor{tabhighlight}\methodbf (Ours) & \cellcolor{tabhighlight}Wan-14B  & \cellcolor{tabhighlight}4  &\cellcolor{tabhighlight}70.81\% &\cellcolor{tabhighlight}95.97\% &\cellcolor{tabhighlight}\textbf{75.00\%} &\cellcolor{tabhighlight}\textbf{67.81\%} &\cellcolor{tabhighlight}98.19\% &\cellcolor{tabhighlight}94.24\% &\cellcolor{tabhighlight}\textbf{83.67\%}  \\
& &\cellcolor{tabhighlight}\methodppbf (Ours) & \cellcolor{tabhighlight}Wan-14B  & \cellcolor{tabhighlight}4  
&\cellcolor{tabhighlight}70.77\% 
&\cellcolor{tabhighlight}\textbf{96.23\%} 
&\cellcolor{tabhighlight}\textbf{75.00\%}
&\cellcolor{tabhighlight}66.84\%
&\cellcolor{tabhighlight}98.33\% &\cellcolor{tabhighlight}\textbf{94.48\%} &\cellcolor{tabhighlight}83.61\%  \\

\cline{2-12} 

& \multirow{4}{*}{Flow}
& VACE~\cite{jiang2025vace} &LTX-0.9B  &80  &52.33\%  &83.37\%  &85.00\%  &58.50\%  &\textbf{98.36\%}  &66.36\%  &73.98\%  \\
& & VACE~\cite{jiang2025vace} &Wan-14B  &100  &65.92\%  &96.30\%  &\textbf{90.00\%}  &\textbf{66.74\%}  &97.71\%  &93.01\%  & \textbf{84.94\%}  \\
& &\cellcolor{tabhighlight}\methodbf (Ours) & \cellcolor{tabhighlight}Wan-14B  & \cellcolor{tabhighlight}4  &\cellcolor{tabhighlight}65.31\% &\cellcolor{tabhighlight}96.67\% &\cellcolor{tabhighlight}80.00\% &\cellcolor{tabhighlight}66.54\% &\cellcolor{tabhighlight}98.00\% &\cellcolor{tabhighlight}93.78\% &\cellcolor{tabhighlight}83.38\%  \\
& &\cellcolor{tabhighlight}\methodppbf (Ours) & \cellcolor{tabhighlight}Wan-14B  & \cellcolor{tabhighlight}4  
&\cellcolor{tabhighlight}\textbf{65.93\%} 
&\cellcolor{tabhighlight}\textbf{96.83\%} 
&\cellcolor{tabhighlight}85.00\%
&\cellcolor{tabhighlight}64.38\%
&\cellcolor{tabhighlight}98.00\% &\cellcolor{tabhighlight}\textbf{94.01\%} &\cellcolor{tabhighlight}84.02\%  \\

\cline{2-12} 

& \multirow{4}{*}{Scribble}
& VACE~\cite{jiang2025vace} &LTX-0.9B  &80  &51.64\%  &89.42\%  &50.00\%  &56.96\%  &98.70\%  &73.25\%  &69.99\%  \\
& & VACE~\cite{jiang2025vace} &Wan-14B  &100  &65.57\%  &96.43\%  &\textbf{75.00\%}  &\textbf{66.20\%}  &98.19\%  &93.18\%  & \textbf{82.42\%}  \\
& &\cellcolor{tabhighlight}\methodbf (Ours) & \cellcolor{tabhighlight}Wan-14B  & \cellcolor{tabhighlight}4  &\cellcolor{tabhighlight}64.10\% &\cellcolor{tabhighlight}96.78\% &\cellcolor{tabhighlight}\textbf{75.00\%} &\cellcolor{tabhighlight}65.35\% &\cellcolor{tabhighlight}98.30\% &\cellcolor{tabhighlight}93.41\% &\cellcolor{tabhighlight}82.16\%  \\
& &\cellcolor{tabhighlight}\methodppbf (Ours) & \cellcolor{tabhighlight}Wan-14B  & \cellcolor{tabhighlight}4  
&\cellcolor{tabhighlight}\textbf{66.23\%} 
&\cellcolor{tabhighlight}\textbf{97.17\%} 
&\cellcolor{tabhighlight}70.00\%
&\cellcolor{tabhighlight}63.09\%
&\cellcolor{tabhighlight}\textbf{98.81\%} &\cellcolor{tabhighlight}\textbf{93.72\%} &\cellcolor{tabhighlight}81.50\%  \\

\cline{2-12} 

& \multirow{4}{*}{Grey}
& VACE~\cite{jiang2025vace} &LTX-0.9B  &80  &58.04\%  &95.46\%  &25.00\%  &60.91\%  &\textbf{99.06\%}  &95.68\%  &72.35\%  \\
& & VACE~\cite{jiang2025vace} &Wan-14B  &100  &72.00\%  &97.60\%  &\textbf{30.00\%}  &64.43\%  &98.75\%  &97.45\%  & 76.70\%  \\
& &\cellcolor{tabhighlight}\methodbf (Ours) & \cellcolor{tabhighlight}Wan-14B  & \cellcolor{tabhighlight}4  &\cellcolor{tabhighlight}71.98\% &\cellcolor{tabhighlight}97.66\% &\cellcolor{tabhighlight}\textbf{30.00\%} &\cellcolor{tabhighlight}64.52\% &\cellcolor{tabhighlight}98.81\% &\cellcolor{tabhighlight}97.48\% &\cellcolor{tabhighlight}76.74\%  \\
& &\cellcolor{tabhighlight}\methodppbf (Ours) & \cellcolor{tabhighlight}Wan-14B  & \cellcolor{tabhighlight}4  
&\cellcolor{tabhighlight}\textbf{72.35\%} 
&\cellcolor{tabhighlight}\textbf{97.71\%} 
&\cellcolor{tabhighlight}\textbf{30.00\%}
&\cellcolor{tabhighlight}\textbf{64.62\%}
&\cellcolor{tabhighlight}98.95\% &\cellcolor{tabhighlight}\textbf{97.73\%} &\cellcolor{tabhighlight}\textbf{76.89\%}  \\

\bottomrule
\end{tabular}
\label{tab:comp2}
\vspace{10pt}
\end{table*}

\begin{table*}[t]
\caption{
\textbf{Quantitative evaluations on VACE-Benchmark}. We compare the automated score metrics of our \methodpp with VACE on the dimensions of video quality and video consistency.
}

\centering
\footnotesize
\setlength{\tabcolsep}{6.5pt}
\begin{tabular}{l|l|c|c|cccccc|c}

\toprule

Type & Method & Base Model & \#NFE$\downarrow$ & \multicolumn{7}{c}{\textbf{Video Quality \& Video Consistency}} \\

& & & & \rotatebox{90}{\begin{minipage}{1.8cm} {\tikz\fill[video_quality] (0,0) circle (.5ex);} \centering Aesthetic Quality\end{minipage}}
& \rotatebox{90}{\begin{minipage}{1.8cm} {\tikz\fill[video_quality] (0,0) circle (.5ex);} \centering Background Consistency\end{minipage}}
& \rotatebox{90}{\begin{minipage}{1.8cm} {\tikz\fill[video_quality] (0,0) circle (.5ex);} \centering Dynamic Degree\end{minipage}}
& \rotatebox{90}{\begin{minipage}{1.8cm} {\tikz\fill[video_quality] (0,0) circle (.5ex);} \centering Imaging Quality\end{minipage}}
& \rotatebox{90}{\begin{minipage}{1.8cm} {\tikz\fill[video_quality] (0,0) circle (.5ex);} \centering Motion Smoothness\end{minipage}}
& \rotatebox{90}{\begin{minipage}{1.8cm} {\tikz\fill[video_quality] (0,0) circle (.5ex);} \centering Subject Consistency\end{minipage}}
& \rotatebox{90}{\begin{minipage}{1.8cm} {\tikz\fill[score_average] (0,0) circle (.5ex);} \centering Normalized Average\end{minipage}} \\

\midrule 

\multirow{4}{*}{Depth} 
& VACE~\cite{jiang2025vace} &LTX-0.9B  &80  &55.73\%  &96.06\%  &60.00\%  &67.53\%  &\textbf{98.92\%}  &93.90\%  &78.69\%  \\
& VACE~\cite{jiang2025vace} &Wan-14B  &100  &62.34\%  &96.10\%  &\textbf{65.00\%}  &68.85\%  &98.57\%  &94.25\%  &80.85\%  \\
&\cellcolor{tabhighlight}\methodbf (Ours) & \cellcolor{tabhighlight}Wan-14B  &\cellcolor{tabhighlight}4 &\cellcolor{tabhighlight}62.46\% & \cellcolor{tabhighlight}95.81\% & \cellcolor{tabhighlight}\textbf{65.00\%} & \cellcolor{tabhighlight}69.69\% & \cellcolor{tabhighlight}98.39\% & \cellcolor{tabhighlight}\textbf{94.53\%} &\cellcolor{tabhighlight}80.98\%  \\

& \cellcolor{tabhighlight}\methodppbf (Ours)  &\cellcolor{tabhighlight}Wan-14B  &\cellcolor{tabhighlight}4  &\cellcolor{tabhighlight}\textbf{62.81\%}  &\cellcolor{tabhighlight}\textbf{96.29\%}  &\cellcolor{tabhighlight}\textbf{65.00\%} &\cellcolor{tabhighlight}\textbf{70.16\%}  &\cellcolor{tabhighlight}98.45\%  &\cellcolor{tabhighlight}94.40\%  &\cellcolor{tabhighlight}\textbf{81.19\%}   \\

\midrule 

\multirow{4}{*}{Pose} 
& VACE~\cite{jiang2025vace} &LTX-0.9B  &80  &59.97\%  &94.80\%  &\textbf{85.00\%}  &66.85\%  &\textbf{98.93\%}  &\textbf{94.91\%}  &83.41\%  \\
& VACE~\cite{jiang2025vace} &Wan-14B  &100  &66.69\%  &\textbf{95.32\%}  &75.00\%  &68.48\%  &98.56\%  &94.19\%  &83.04\%  \\
&\cellcolor{tabhighlight}\methodbf (Ours) & \cellcolor{tabhighlight}Wan-14B  &\cellcolor{tabhighlight}4  
&\cellcolor{tabhighlight}\textbf{66.71\%} & \cellcolor{tabhighlight}94.99\% & \cellcolor{tabhighlight}80.00\% & \cellcolor{tabhighlight}\textbf{70.40\%} &\cellcolor{tabhighlight}98.45\% &\cellcolor{tabhighlight}94.26\% &\cellcolor{tabhighlight}84.13\%  \\

& \cellcolor{tabhighlight}\methodppbf (Ours)  &\cellcolor{tabhighlight}Wan-14B  &\cellcolor{tabhighlight}4  &\cellcolor{tabhighlight}66.64\%  &\cellcolor{tabhighlight}94.99\%  &\cellcolor{tabhighlight}\textbf{85.00\%} &\cellcolor{tabhighlight}68.27\%  &\cellcolor{tabhighlight}98.26\%  &\cellcolor{tabhighlight}93.57\%  &\cellcolor{tabhighlight}\textbf{84.45\%}   \\

\midrule 

\multirow{4}{*}{Flow} 
& VACE~\cite{jiang2025vace} &LTX-0.9B  &80  &55.36\%  &95.99\%  &\textbf{75.00\%}  &64.30\%  &\textbf{99.07\%}  &94.14\%  &80.64\%  \\
& VACE~\cite{jiang2025vace} &Wan-14B  &100  &60.44\%  &\textbf{96.09\%}  &\textbf{75.00\%}  &66.99\%  &98.48\%  &94.71\%  &81.95\%  \\
&\cellcolor{tabhighlight}\methodbf (Ours) & \cellcolor{tabhighlight}Wan-14B  &\cellcolor{tabhighlight}4  &\cellcolor{tabhighlight}60.24\% & \cellcolor{tabhighlight}95.65\% & \cellcolor{tabhighlight}\textbf{75.00\%} & \cellcolor{tabhighlight}\textbf{70.02\%} & \cellcolor{tabhighlight}98.42\% & \cellcolor{tabhighlight}94.26\% &\cellcolor{tabhighlight}82.27\%  \\

& \cellcolor{tabhighlight}\methodppbf (Ours)  &\cellcolor{tabhighlight}Wan-14B  &\cellcolor{tabhighlight}4  &\cellcolor{tabhighlight}\textbf{60.96\%}  &\cellcolor{tabhighlight}95.97\%  &\cellcolor{tabhighlight}\textbf{75.00\%} &\cellcolor{tabhighlight}68.81\%  &\cellcolor{tabhighlight}98.50\%  &\cellcolor{tabhighlight}\textbf{95.08\%}  &\cellcolor{tabhighlight}\textbf{82.39\%}   \\

\midrule 

\multirow{4}{*}{Scribble} 
& VACE~\cite{jiang2025vace} &LTX-0.9B  &80  &53.81\%  &96.57\%  &45.00\%  &66.05\%  &\textbf{99.21\%}  &95.23\%  &75.98\%  \\
& VACE~\cite{jiang2025vace} &Wan-14B  &100  &\textbf{59.44\%}  &96.49\%  &\textbf{50.00\%}  &67.01\%  &98.46\%  &95.44\%  &77.80\%  \\
&\cellcolor{tabhighlight}\methodbf (Ours) & \cellcolor{tabhighlight}Wan-14B  &\cellcolor{tabhighlight}4  &\cellcolor{tabhighlight}57.99\% & \cellcolor{tabhighlight}\textbf{96.75\%} & \cellcolor{tabhighlight}\textbf{50.00\%} & \cellcolor{tabhighlight}\textbf{69.16\%} & \cellcolor{tabhighlight}98.60\% & \cellcolor{tabhighlight}\textbf{95.79\%} &\cellcolor{tabhighlight}\textbf{78.05\%} \\

& \cellcolor{tabhighlight}\methodppbf (Ours)  &\cellcolor{tabhighlight}Wan-14B  &\cellcolor{tabhighlight}4  &\cellcolor{tabhighlight}58.53\%  &\cellcolor{tabhighlight}96.58\%  &\cellcolor{tabhighlight}\textbf{50.00\%} &\cellcolor{tabhighlight}67.51\%  &\cellcolor{tabhighlight}98.72\%  &\cellcolor{tabhighlight}95.72\%  &\cellcolor{tabhighlight}77.84\%   \\

\midrule 

\multirow{4}{*}{Grey} 
& VACE~\cite{jiang2025vace} &LTX-0.9B  &80  &58.63\%  &\textbf{95.65\%}  &55.00\%  &59.93\%  &\textbf{98.89\%}  &92.39\%  &76.75\%  \\
& VACE~\cite{jiang2025vace} &Wan-14B  &100 &62.87\%  &95.46\%  &60.00\%  &66.35\%  &98.40\%  &92.91\%  &79.33\%  \\
&\cellcolor{tabhighlight}\methodbf (Ours) & \cellcolor{tabhighlight}Wan-14B  &\cellcolor{tabhighlight}4  &\cellcolor{tabhighlight}62.25\% & \cellcolor{tabhighlight}95.64\% & \cellcolor{tabhighlight}\textbf{65.00\%} & \cellcolor{tabhighlight}\textbf{66.74\%} & \cellcolor{tabhighlight}98.40\% & \cellcolor{tabhighlight}92.88\% &\cellcolor{tabhighlight}\textbf{80.15\%}  \\

& \cellcolor{tabhighlight}\methodppbf (Ours)  &\cellcolor{tabhighlight}Wan-14B  &\cellcolor{tabhighlight}4  &\cellcolor{tabhighlight}\textbf{63.10\%}  &\cellcolor{tabhighlight}95.48\%  &\cellcolor{tabhighlight}60.00\% &\cellcolor{tabhighlight}64.92\%  &\cellcolor{tabhighlight}98.45\%  &\cellcolor{tabhighlight}\textbf{93.08\%}  &\cellcolor{tabhighlight}79.17\%   \\

\midrule 

\multirow{4}{*}{Extension} 
& VACE~\cite{jiang2025vace} &LTX-0.9B  &80  &57.82\%  &\textbf{96.06\%}  &35.00\%  &68.15\%  &\textbf{99.34\%}  &\textbf{94.06\%}  &75.07\%  \\
& VACE~\cite{jiang2025vace} &Wan-14B  &100  &\textbf{61.34\%}  &95.05\%  &60.00\%  &69.55\%  &98.75\%  &92.83\%  &79.59\%  \\
&\cellcolor{tabhighlight}\methodbf (Ours) & \cellcolor{tabhighlight}Wan-14B  &\cellcolor{tabhighlight}4  &\cellcolor{tabhighlight}61.25\% & \cellcolor{tabhighlight}95.94\% & \cellcolor{tabhighlight}45.00\% & \cellcolor{tabhighlight}\textbf{70.01\%} & \cellcolor{tabhighlight}99.04\% & \cellcolor{tabhighlight}93.86\% &\cellcolor{tabhighlight}77.52\%  \\

& \cellcolor{tabhighlight}\methodppbf (Ours)  &\cellcolor{tabhighlight}Wan-14B  &\cellcolor{tabhighlight}4  &\cellcolor{tabhighlight}60.54\%  &\cellcolor{tabhighlight}94.44\%  &\cellcolor{tabhighlight}\textbf{70.00\%} &\cellcolor{tabhighlight}68.68\%  &\cellcolor{tabhighlight}98.67\%  &\cellcolor{tabhighlight}92.53\%  &\cellcolor{tabhighlight}\textbf{80.81\%}   \\

\midrule 

\multirow{4}{*}{Outpaint} 
& VACE~\cite{jiang2025vace} &LTX-0.9B  &80  &56.98\%  &96.47\%  &45.00\%  &\textbf{69.39\%}  &\textbf{99.17\%}  &\textbf{95.50\%}  &77.08\%  \\
& VACE~\cite{jiang2025vace} &Wan-14B  &100 &58.53\%  &96.83\%  &40.00\%  &68.55\%  &99.03\%  &95.34\%  &76.38\%  \\
&\cellcolor{tabhighlight}\methodbf (Ours) & \cellcolor{tabhighlight}Wan-14B  &\cellcolor{tabhighlight}4  &\cellcolor{tabhighlight}\textbf{58.86\%} & \cellcolor{tabhighlight}\textbf{96.84\%} & \cellcolor{tabhighlight}50.00\% & \cellcolor{tabhighlight}69.18\% & \cellcolor{tabhighlight}99.00\% & \cellcolor{tabhighlight}95.48\% &\cellcolor{tabhighlight}78.23\%  \\

& \cellcolor{tabhighlight}\methodppbf (Ours)  &\cellcolor{tabhighlight}Wan-14B  &\cellcolor{tabhighlight}4  &\cellcolor{tabhighlight}58.27\%  &\cellcolor{tabhighlight}96.72\%  &\cellcolor{tabhighlight}\textbf{55.00\%} &\cellcolor{tabhighlight}69.04\%  &\cellcolor{tabhighlight}98.88\%  &\cellcolor{tabhighlight}95.25\%  &\cellcolor{tabhighlight}\textbf{78.86\%}   \\

\midrule 

\multirow{4}{*}{R2V} 
& VACE~\cite{jiang2025vace} &LTX-0.9B  &80 &62.57\%  &97.85\%  &17.50\%  &\textbf{72.32\%}  &\textbf{99.47\%}  &97.91\%  &74.60\%  \\
& VACE~\cite{jiang2025vace} &Wan-14B  &100 &65.99\%  &\textbf{98.77\%}  &15.00\%  &71.19\%  &99.38\%  &\textbf{98.67\%}  &74.83\%  \\
&\cellcolor{tabhighlight}\methodbf (Ours) & \cellcolor{tabhighlight}Wan-14B  &\cellcolor{tabhighlight}4  &\cellcolor{tabhighlight}65.47\% & \cellcolor{tabhighlight}98.58\% & \cellcolor{tabhighlight}20.00\% & \cellcolor{tabhighlight}71.94\% & \cellcolor{tabhighlight}99.32\% & \cellcolor{tabhighlight}98.48\% &\cellcolor{tabhighlight}75.63\%  \\

& \cellcolor{tabhighlight}\methodppbf (Ours)  &\cellcolor{tabhighlight}Wan-14B  &\cellcolor{tabhighlight}4  &\cellcolor{tabhighlight}\textbf{66.51\%}  &\cellcolor{tabhighlight}97.83\%  &\cellcolor{tabhighlight}\textbf{37.50\%} &\cellcolor{tabhighlight}70.90\%  &\cellcolor{tabhighlight}99.00\%  &\cellcolor{tabhighlight}97.43\%  &\cellcolor{tabhighlight}\textbf{78.20\%}   \\




\bottomrule
\end{tabular}
\label{tab:comp_vacebench}
\end{table*}

\noindent\textbf{Discriminator architecture and regularization.}
The discriminator head uses $K$ learnable register tokens, which cross-attend
to feature maps from selected layers $\mathcal S$ of $F_\phi$ and are pooled
with the time embedding by a small MLP into one logit. This reuses the
fake-score backbone without a separate encoder. We regularize real logits
with the finite-difference R1 approximation of APT~\cite{lin2025apt}:
\begin{equation}
  \begin{aligned}
  \Delta_\omega
  &=D_\omega(\bm{x}_t^{\real}+\sigma_{\mathrm{r1}}\bm{\delta},t)
    -D_\omega(\bm{x}_t^{\real},t),\\
  \mathcal{L}_{\mathrm{R1}}(\omega)
  &=w_{\mathrm{r1}}\,
    \mathbb{E}_{\bm{x}_t^{\real},\bm{\delta}}\!\left[
      \left(\frac{\Delta_\omega}{\sigma_{\mathrm{r1}}}\right)^2\right],
  \end{aligned}
  \label{eq:r1}
\end{equation}
where $\bm{\delta}\sim\mathcal N(\bm 0,\bm I)$.

\noindent\textbf{Optimization.}
The default generator and both score networks use the 14B architecture.
The real-score teacher remains frozen throughout training. We optimize the
generator, fake-score network, and discriminator head with FSDP for 2,100
steps on four NVIDIA H200 GPUs, with one sample per GPU. We use AdamW
($\beta_1=0$, $\beta_2=0.999$), generator and
critic learning rates of $2\times10^{-6}$ and $4\times10^{-7}$, respectively,
and a critic-update ratio $R=5$.

\noindent\textbf{Transport.}
We use $\epsilon=0.1$, $\tau_s=0.1$, $\tau_t=10.0$,
$\alpha_{\mathrm{OT}}=10.0$, and 60 Sinkhorn iterations at $2\times$ spatial
downsampling. The DMD and OT weights are $w_{\mathrm{DMD}}=0.9$ and
$w_{\mathrm{OT}}=0.1$. For nonnegative measures, we use the generalized KL
divergence
\[
  \mathrm{KL}(\bm{p}\,\|\,\bm{q})
  = \sum_k\left[p_k\log\frac{p_k}{q_k}-p_k+q_k\right],
\]
with the standard continuous extension at zero. Algorithm~\ref{alg:auot_sinkhorn}
gives the complete log-domain solver used to compute the asymmetric transport
plan. It optimizes Eq.~\eqref{eq:auot} under the negative-entropy convention
$\mathcal{E}(\Tplan)=\sum_{i,j}T_{ij}(\log T_{ij}-1)$ and the generalized KL
defined above.

\begin{algorithm}[t]
\SetAlgoLined
\KwIn{Cost matrix $\Cmat\in\mathbb{R}^{I\times J}$; marginals
$\bm{u}\in\mathbb{R}_{+}^{I}$ and $\bm{\mu}\in\mathbb{R}_{+}^{J}$;
$\epsilon,\tau_s,\tau_t>0$; tolerance $\texttt{tol}$; maximum iterations $L$}
\KwOut{Asymmetric unbalanced transport plan
$\Topt\in\mathbb{R}_{+}^{I\times J}$}
$\log\bm{K}\leftarrow-\Cmat/\epsilon$\;
$\rho_s\leftarrow\tau_s/(\tau_s+\epsilon)$,
$\rho_t\leftarrow\tau_t/(\tau_t+\epsilon)$\;
$\bm{f}\leftarrow\bm{0}_{I}$, $\bm{g}\leftarrow\bm{0}_{J}$\;
\For{$\ell=1,\ldots,L$}{
  $\bm{f}_{\mathrm{old}}\leftarrow\bm{f}$,
  $\bm{g}_{\mathrm{old}}\leftarrow\bm{g}$\;
  \For{$i=1,\ldots,I$}{
    $f_i\leftarrow\rho_s\!\left[\log u_i-
    \operatorname{LSE}_{j}\!\left(\log K_{ij}+g_j\right)\right]$\;
  }
  \For{$j=1,\ldots,J$}{
    $g_j\leftarrow\rho_t\!\left[\log\mu_j-
    \operatorname{LSE}_{i}\!\left(\log K_{ij}+f_i\right)\right]$\;
  }
  \If{$\|\bm{f}-\bm{f}_{\mathrm{old}}\|_1+
  \|\bm{g}-\bm{g}_{\mathrm{old}}\|_1<\texttt{tol}$}{
    \textbf{break}\;
  }
}
$T^*_{ij}\leftarrow\exp\!\left(f_i+\log K_{ij}+g_j\right)$
for all $(i,j)$\;
\KwRet{$\Topt$}\;
\caption{Log-domain generalized Sinkhorn algorithm for asymmetric
unbalanced OT in Eq.~\eqref{eq:auot}. Here
$\operatorname{LSE}_{k}(z_k)=\log\sum_k\exp(z_k)$.}
\label{alg:auot_sinkhorn}
\end{algorithm}

\noindent\textbf{Adversarial training.}
The head uses $K=4$ register tokens of dimension $d_r=512$, cross-attending
to layers 15, 25, 35, and 39 of the fake-score network. For Wan2.1-14B, the
backbone hidden dimension is 5120. We use
$w_{\mathrm{GAN-G}}=0.05$, $w_{\mathrm{GAN-D}}=0.01$, critic timestep shift
12, $w_{\mathrm{r1}}=30$, and $\sigma_{\mathrm{r1}}=0.1$. Adversarial
training begins after 900 warm-up steps; GAN-G and GAN-D are applied every
fifth generator step and every tenth critic step, respectively. These settings
are shared across T2V, I2V, and VACE. For cross-scale distillation, we scale
the discriminator learning rate by 0.2, set $w_{\mathrm{r1}}=60$, and reduce
$w_{\mathrm{GAN-G}}$ to 0.02.

\section{Multi-Task Data and UVCBench Construction}
\label{sec:supp_data}

We follow the automated data and benchmark construction pipeline introduced
in our conference version~\cite{wang2026vdot}; the details required to
reproduce the training corpus and \methodbench\ are summarized below.

\noindent\textbf{Training data.}
We collect 250\,000 4K Artgrid videos, rescale them to
$832\times480$, and caption them with InternVL~\cite{chen2024internvl}.
Captions serve as both prompts and filtering criteria. We express the first
frame for I2V and automatically extracted depth, pose, optical flow, scribble,
and mask signals for VACE-style tasks in the shared condition format. For
pose, for example,
Qwen3~\cite{yang2025qwen3} filters captions without human subjects; extracted
pose sequences are ranked by weighted inter-frame keypoint displacement, and
the highest-ranked samples are retained. Analogous curation is applied to
each condition type.

\noindent\textbf{Benchmark construction.}
\methodbench\ contains 8 single-condition and 10 composite tasks, with 20
videos per task. Single-condition tasks cover pose, depth, optical flow,
greyscale, scribble, face/object reference, canvas outpainting, and temporal
extension. Composite tasks pair a reference image or first frame with pose,
depth, flow, greyscale, or scribble. Qwen3-Max expands task descriptions into
candidate prompts; Wan2.1-14B synthesizes V2V/MV2V exemplars, while VACE-14B
uses Qwen-Image~\cite{wu2025qwenimagetechnicalreport} references for R2V and
composite tasks. The corresponding inputs and masks are derived automatically,
then manually verified and regenerated when necessary.

\noindent\textbf{Evaluation protocol.}
Generated videos on \methodbench\ are scored using six VBench dimensions:
Aesthetic Quality, Background Consistency, Dynamic Degree, Imaging Quality,
Motion Smoothness, and Subject Consistency. We additionally report their
normalized average as a compact overall measure while retaining the
individual dimensions to expose quality--motion trade-offs.

\section{Full Results on the UVCBench Benchmark}
\label{sec:supp_uvcbench}

Table~\ref{tab:uvcbench_single_supp} complements the representative
single-condition results in Sec.~\ref{sec:main_results} with depth, scribble,
and temporal extension. On the densely specified depth and scribble tasks,
\methodpp\ has slightly lower normalized averages than \method; on extension,
it preserves a comparable overall score while improving imaging and subject
consistency. Table~\ref{tab:comp2} further reports the full comparison on all
ten composite tasks of UVCBench~\cite{wang2026vdot}. Relative to \method,
\methodpp\ obtains higher normalized averages on some tasks and lower ones
on others, with lower imaging quality on most composite tasks.
FirstFrame+Flow illustrates the appearance--dynamics trade-off, combining
higher Dynamic Degree with lower Imaging Quality.

\section{Full Results on the VACE Benchmark}
\label{sec:supp_vacebench}

Table~\ref{tab:comp_vacebench} reports the full per-task comparison on the
VACE benchmark~\cite{jiang2025vace}, complementing the \methodbench\
results in Sec.~\ref{sec:main_results}. Despite using only four NFEs,
\methodpp\ remains competitive with the many-step VACE variants across the
densely constrained depth, pose, flow, scribble, and greyscale tasks. The
differences become more pronounced on extension, outpainting, and R2V, where
the inputs specify only part of the target video. On these tasks, the stronger
normalized averages are driven mainly by increased dynamics while the
consistency metrics remain high. This pattern supports the intended role of
the relaxed student marginal: uncertain regions need not be forced into
arbitrary matches, allowing plausible content and motion to develop around the
observed evidence.

\end{document}